\theoremstyle{plain}
\newtheorem{theorem}{Theorem}[section]
\newtheorem{definition}[theorem]{Definition}
\newtheorem{lemma}[theorem]{Lemma}
\theoremstyle{remark}
\newtheorem{notation}{Notation}
\newenvironment{sloppypar*}
 {\sloppy\ignorespaces}
 {\par}
\newcommand{\qedClaim}{\hfill \ensuremath{\Box}}
\newcommand{\fedavg}{\textit{FedAvg}\xspace}
\newcommand{\fedsgd}{\textit{FedSGD}\xspace}
\newcommand{\barSet}{\ensuremath{\mathrm{S_{Cent}}\xspace}}
\newcommand{\trueBar}{\ensuremath{\mathrm{Cent}^{\star}}\xspace}
\newcommand{\encBall}{\ensuremath{\mathrm{Ball_{cov}(\barSet)}}\xspace}
\newcommand{\radiusEncBall}{\ensuremath{\mathrm{Rad_{cov}}}\xspace}
\newcommand{\convexHull}{\ensuremath{\mathrm{Conv}}\xspace}
\newcommand{\distance}{\ensuremath{\mathrm{dist}}\xspace}
\newcommand{\tb}{\ensuremath{\textrm{TH}}\xspace}
\newcommand{\ttb}{\ensuremath{\textrm{TTH}}\xspace}
\newcommand{\bb}{\ensuremath{\textrm{CH}}\xspace}
\DeclareMathOperator*{\argmin}{arg\,min}
\newcommand\restr[2]{{
\left.\kern-\nulldelimiterspace 
#1 
\vphantom{\big|} 
\right|_{#2} 
}}
\newcommand{\mytitle}[1]{
    \begingroup
    \fontsize{16pt}{18pt}\fontseries{bx}\selectfont \centering #1 \par  
    \endgroup
}
\newcommand{\myauthors}[1]{
    \begingroup
    \fontsize{12pt}{14pt}\fontseries{bx}\selectfont \centering #1 \par  
    \endgroup
}
\begin{document}

\mytitle{Centroid Approximation for \\Byzantine-Tolerant Federated Learning}

\bigskip
\bigskip

\newcounter{fnnumber}

\myauthors{Melanie Cambus\footnote{Aalto University, Finland}, Darya Melnyk\footnote{TU Berlin, Germany}\setcounter{fnnumber}{\thefootnote}  , Tijana Milentijević\footnotemark[\thefnnumber], Stefan Schmid\footnotemark[\thefnnumber]
}

\bigskip
\bigskip

\begin{abstract}
Federated learning allows each client to keep its data locally when training machine learning models in a distributed setting. Significant recent research established the requirements that the input must satisfy in order to guarantee convergence of the training loop. 
This line of work uses averaging as the aggregation rule for the training models. 
In particular, we are interested in whether federated learning is robust to Byzantine behavior, and observe and investigate a tradeoff between the average/centroid and the validity conditions from distributed computing.
We show that the various validity conditions alone do not guarantee a good approximation of the average. 
Furthermore, we show that reaching good approximation does not give good results in experimental settings due to possible Byzantine outliers. 
Our main contribution is the first lower bound of $\min\{\frac{n-t}{t},\sqrt{d}\}$ on the centroid approximation under box validity that is often considered in the literature, where $n$ is the number of clients, $t$ the upper bound on the number of Byzantine faults, and $d$ is the dimension of the machine learning model. We complement this lower bound by an upper bound of $2\min\{n,\sqrt{d}\}$, by providing a new analysis for the case $n<d$. In addition, we present a new algorithm that achieves a $\sqrt{2d}$-approximation under convex validity, which also proves that the existing lower bound in the literature is tight. We show that all presented bounds can also be achieved in the distributed peer-to-peer setting. We complement our analytical results with empirical evaluations in federated stochastic gradient descent and federated averaging settings.
\end{abstract}

\section{Introduction}

Federated learning~\cite{google-federated, fedavg} is a decentralized technique for training machine learning models based on sharing model parameters while keeping the training data \emph{locally}. In this work, we are particularly interested in the setting where the clients share updates --- namely either the gradients in case of \emph{federated stochastic gradient descent (\fedsgd)} or the model parameters in case of \emph{federated averaging (\fedavg)} --- with a trusted central server.  After the server has received the updates, it aggregates the results, updates the model parameters, and then shares the new model parameters with the clients for the next training round. This technique is popular when data privacy requirements prevent clients from sharing their data directly with the server \cite{ZHANG2021106775,white-house-report, kairouz2021advances}.
The most common aggregation rule used to select a representative vector (gradient or model parameters) is averaging~\cite{google-federated, fedavg, zhao2018federated, adaptivefederatedoptimization, scaffold, fedlin, fednova, fedprox, fedexp, fedvarp}. However, when averaging is used, training can fail if some clients do not behave as expected. In particular, a single faulty vector can arbitrarily shift the average in any direction, leading to erroneous updates of the model parameters. Especially in the context of federated learning, it is crucial to be robust to malicious behavior and Byzantine faults, which is also the focus of our paper. In the case of homogeneous training data, it is usually possible to use similarities between vectors to exclude such outliers~\cite{fang2022bridgebyzantineresilientdecentralizedgradient, Yang_2019, el2020genuinely}. If the data is heterogeneous, such similarities may not exist. 

Previously proposed Byzantine-tolerant FL methods for heterogeneous datasets focus on showing convergence of the training process and apply statistical methods for vector aggregation~\cite{data2021byzantine,10.1609/aaai.v33i01.33011544,ghosh2019robustfederatedlearningheterogeneous}. To mitigate Byzantine behavior, their methods remove outliers from the data and make additional assumptions on the input vectors of the clients. An alternative approach proposed in the literature is to use the absolute distance to the average to evaluate federated learning algorithms~\cite{jungle}. This absolute measure, however, only allows one to analyze the worst-case input setting. The authors showed that the aggregated average can be as bad as the distance between the two furthest input vectors. Recently, a new approximation measure was introduced to estimate the quality of an aggregated average in a Byzantine environment~\cite{centroid-paper} for approximate agreement algorithms. This approximation measure allows one to not only analyze the worst-case input setting, but rather estimate the quality of an algorithm based on the given input distribution. .

In this work, we transfer the idea of approximating the average vector to the traditional FL setting with $n$ clients and one trusted server. In distributed computing, validity conditions are used to restrict an algorithm from terminating on arbitrary inputs. We investigate the trade-off between the validity conditions and the approximation of the average vector for federated learning. 
This allows us to present aggregation algorithms that perform well under different input distributions. 

\subsection{The benefits of average approximation}

In this paper, we consider approximation of the average to evaluate the quality of our algorithms. As we motivate in the following, a low average approximation ratio implies that an algorithm performs well for a given input distribution. Formally, given $n$ vectors, up to $t$ of which can be Byzantine, an optimal choice of the average vector under Byzantine attacks is defined as the midpoint of the smallest ball, denoted $B$, that encloses each average obtained from every subset of $n-t$ vectors. When $t$ clients are Byzantine, exactly one of these averages was computed from only non-faulty vectors. Therefore, the midpoint minimizes the maximum distance to the non-faulty average vector in the worst case. The approximation ratio is then defined as the ratio between the distance from the aggregation vector to the non-faulty average, and the radius of $B$.

The main advantage of this approximation ratio is that it is defined relative to the input setting:
In scenarios with heterogeneous training data, Byzantine vectors cannot be differentiated from non-faulty vectors. That is, a large radius of the minimum covering  ball either represents ``bad'' Byzantine behavior, or a ``bad'' initial configuration where each client has vastly different input. In such a scenario, no aggregation algorithm can choose a representative average vector. The large ball radius prevents one from punishing an algorithm for a large absolute distance to the average vector. A small radius, on the other hand, represents ``benign'' Byzantine behavior and very similar inputs. In such a scenario, an aggregation algorithm should be able to choose an aggregation vector that is close to the original average. Figure~\ref{fig:approximation_def} visualizes the continuous change in the ball radius depending on the input vectors of the clients. 

\begin{figure}[tb]
\centering

    \begin{subfigure}[b]{0.6\textwidth}            
            \includegraphics[height=4cm]{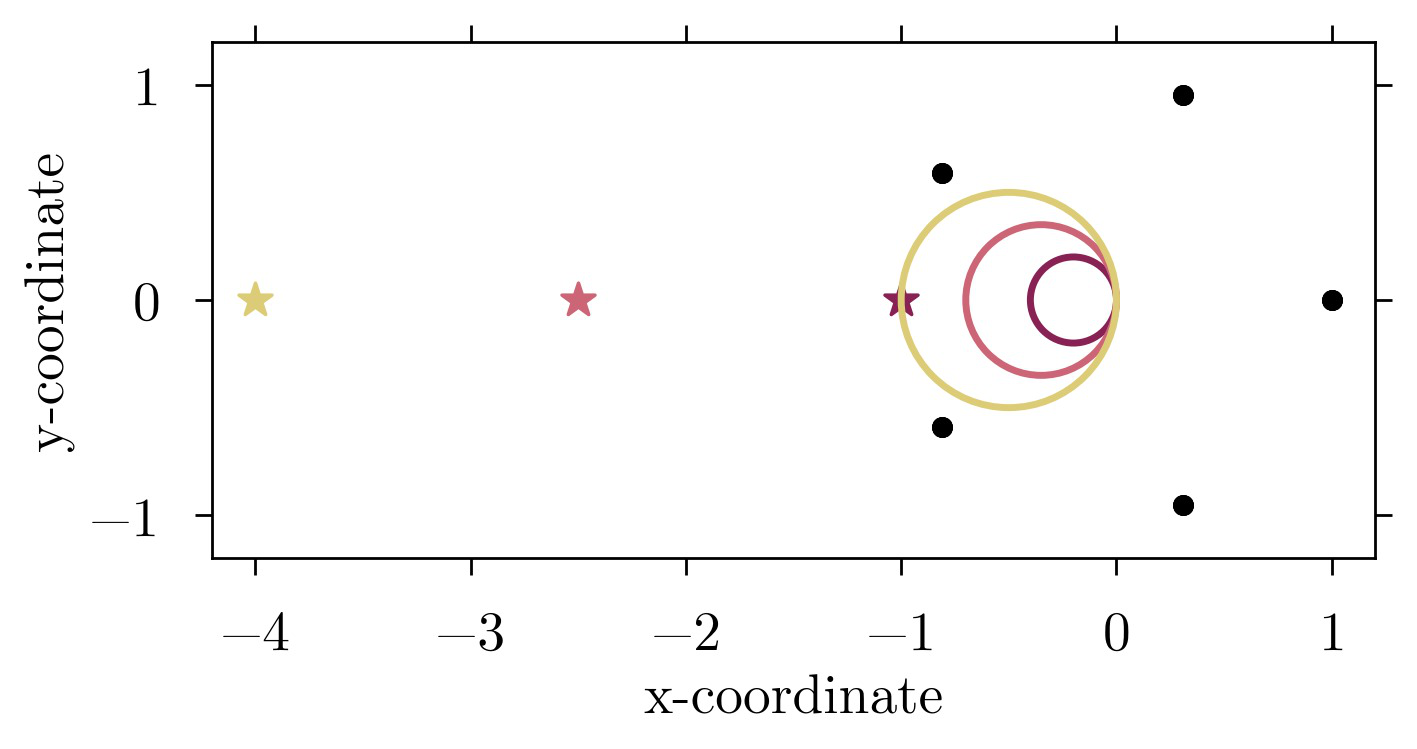}
            \caption{Distribution of the input vectors}
            
    \end{subfigure}
    \begin{subfigure}[b]{0.39\textwidth}
            \centering
            \includegraphics[height=3.9cm]{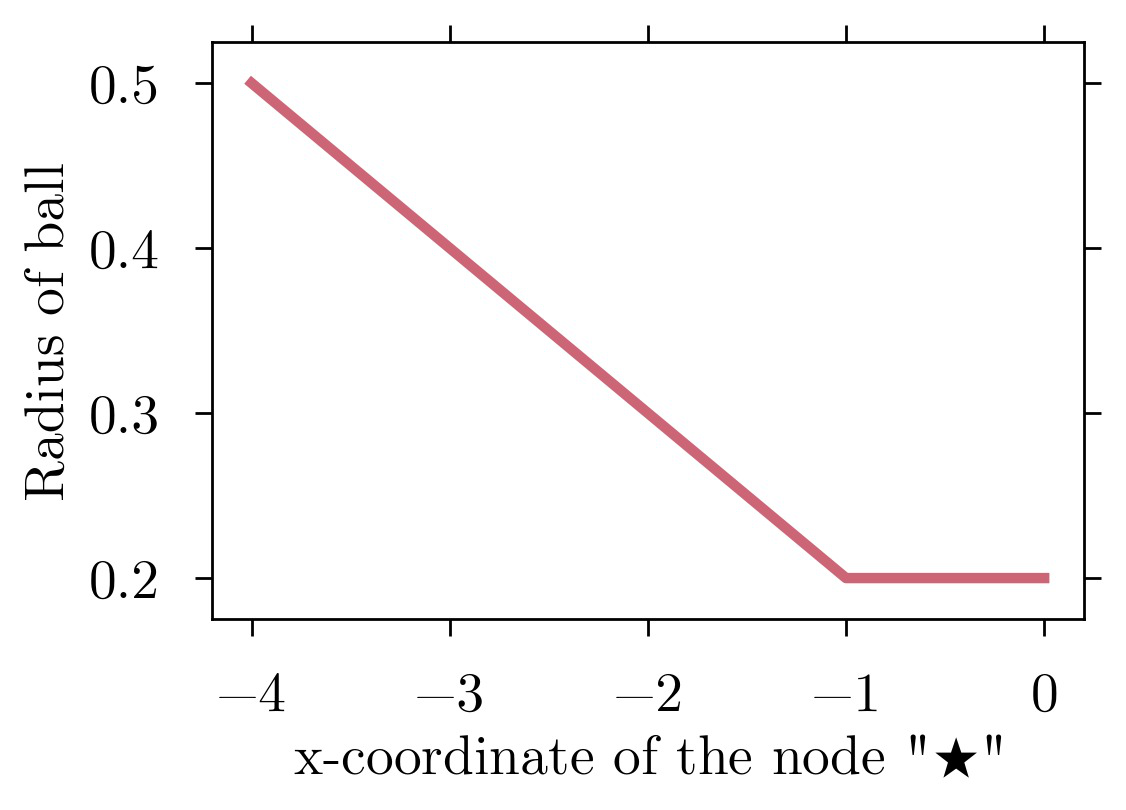}
            \caption{Radius of the ball of averages}
    \end{subfigure}
    \caption{\textbf{Illustration of the benefits of the approximation definition that is relative to the input distribution.} This figure shows how the radius of the smallest ball containing all averages depends on different distributions of the inputs. There are $6$ clients, one of which is possibly Byzantine. On the left, three input scenarios are considered. The points represent input vectors that are fixed in all scenarios. The three stars represent three different inputs of the sixth client. The circles represent the three smallest balls containing all possible averages on subsets of five points, one for each input of the last client. On the right, the radius of the minimum covering  ball is presented when the $x$-coordinate of the last client is moved from $-4$ to $0$. Observe that the radius of the minimum covering  ball cannot be zero, as any of the points in the figure are also potentially Byzantine. The yellow star represents a ``bad'' input setting where it is not clear whether there is one non-faulty client with very distinct data, or whether a Byzantine party tries to disrupt the training process. The dark red star shows a benign setting where an aggregation algorithm should be able to output a vector close to the actual centroid. The radius of the ball used to define the approximation ratio decreases with more benign input distributions.} 
    \label{fig:approximation_def}
\end{figure}

\subsection{Contributions}\label{sec:contributions}

We first show that known validity conditions from the literature do not guarantee good approximation of the average. We then show that under weak and strong validity conditions, both of which only require the server to output the same vector as the non-faulty client if all non-faulty clients send the server the same vector, a constant approximation of the average can be achieved.

Our first main contribution is almost tight bounds for algorithms that satisfy box validity, where the aggregation vector lies in the coordinate-parallel hyperbox of non-faulty vectors. We present a lower bound of $\min\{\sqrt{(n-t) /t},\sqrt{d}\}$ for the centroid approximation and show that the existing Box algorithm can achieve an approximation of $2\sqrt{\min\{n,d\}}$ by providing a new upper bound proof for the case $n<d$. 
Our second main contribution is a tight upper bound (a $2d$-approximation) for convex validity, where the aggregation vector lies in the convex hull of all vectors. Note that this setting is only of theoretical interest, as it requires the number of clients to be larger than the dimension of their input vectors ($n>(d+1)\cdot t$), while in FL the dimension of the data is usually much larger than the number of clients. We show that all presented bounds can also be achieved in the distributed peer-to-peer setting. 
The agreement algorithms presented differ from \cite{jungle,centroid-paper}, since only exact agreement is considered in this paper. 

Finally, we extend our analytical results with simulations. In this evaluation, we differentiate between the settings where the gradients (\fedsgd) and the model parameters (\fedavg) are aggregated, and show how selected algorithms perform under different failure scenarios.

\subsection{Related Work}

Dean et al.~\cite{dean2012largescale} proposed a first distributed solution to train a large machine learning model on tens of thousands of CPU cores. Their work initiated a study of asynchronous algorithms for distributed stochastic gradient descent (SGD) that focus on scalability and communication efficiency~\cite{10.5555/2685048.2685095,NIPS2014_1ff1de77,NIPS2013_d6ef5f7f,pmlr-v32-shamir14}. The synchronous version of SGD has been proposed by Chen et al.~\cite{45187}. We refer to this framework as \fedsgd. \fedsgd has also been considered under Byzantine adversaries, both in synchronous~\cite{Alistarh_ByzantineSGD,pmlr-v80-mhamdi18a} and asynchronous settings~\cite{pmlr-v80-damaskinos18a}. While the mentioned work assumes homogeneous data distributions, some efforts have also been made to incorporate data heterogeneity~\cite{10.1609/aaai.v33i01.33011544,pmlr-v97-xie19b,ghosh2019robustfederatedlearningheterogeneous,data2021byzantine}. To tackle Byzantine behavior of the clients, these approaches make use of homogeneity of the data, apply statistical methods, or try to detect Byzantine behavior.

Federated averaging was introduced by McMahan et al.~\cite{google-federated, fedavg} to perform training where the data is private, unbalanced, non-IID, and distributed across mobile devices. Here, model parameters instead of gradients are exchanged with a server. We refer to this framework as \fedavg in this paper. Much of the follow-up work has focused on showing convergence of the models in this framework without failures~\cite{fedlin,adaptivefederatedoptimization,fednova, fedexp, fedvarp, cho2020clientselectionfederatedlearning}. Byzantine-tolerant approaches have been introduced also for this setting, where the goal is to remove Byzantine behavior via stochastic quantization and outlier detection mechanisms~\cite{brea2021}. 

In contrast to previous work, we do not focus on removing Byzantine clients from the training process, as such a process may influence the accuracy when the data is heterogeneous and no malicious behavior is present in the system. Instead, we use the approximation definition for the average from Cambus and Melnyk~\cite{centroid-paper} that naturally incorporates Byzantine clients. In contrast to \cite{centroid-paper}, we consider a stronger model without agreement, which makes our lower bound results more powerful, and introduce new algorithms that achieve an optimal approximation.

\section{Model and Definitions}
\label{sec:model}

We consider a client/server setting with one server and $n$ clients.
The goal is to train a global neural network on the server with data spread heterogeneously among clients. In order to train the global model without gathering data from clients, each client possesses its own copy of the model and then shares only vectors generated from their local data and model with the server. The server then needs to aggregate the received vectors to advance the training of the global model. The training process is performed in synchronous rounds.

On top of the training set-up, we consider that up to $t<n/3$ of the clients can be Byzantine, i.e., they can behave arbitrarily and are not bound to following the protocol. The aggregation algorithms used by the server hence need to account for this. Note that we use the standard assumption from distributed computing that Byzantine clients are not differentiable from non-faulty clients as long as they follow the protocol and only lie about their input. 
We treat all clients equally, assuming no size difference in the local data, in order to restrict the power of the Byzantine clients.

The focus of this work is on the aggregation function. Consider a specific communication round, in which each client sends a vector to the server, and the server aggregates those vectors. 
To account for the potential presence of Byzantine clients in the system, the aggregation algorithm used by the server needs to compute an aggregation that is as little influenced by Byzantine vectors as possible. In this work, we focus on the most common aggregation rule in FL -- the averaging aggregation rule. 
Since Byzantine clients can be present in the system and are undetectable, it is impossible for an aggregation algorithm to determine the centroid of vectors of non-faulty clients. We are therefore interested in the quality of the computed aggregated vector.

\subsection{Centroid approximation}

We assume that the server receives up to $m$ vectors $\{v_i, i\in[m]\}$, where $n-t\le m \le n$. Each vector $v$ is in the normed vector space $\left(\mathbb{R}^d, \Vert\cdot \Vert_2\right)$, where $\forall x = (x_1, \dots, x_d)\in\mathbb{R}^d, \Vert x\Vert_2 = \sqrt{\sum_{k=1}^dx_k^2 }$ and the distance between any two vectors $v$ and $w$ is their Euclidean distance $\distance(v, w) = \Vert v-w \Vert_2$. When not specified, $\Vert \cdot \Vert$ refers to the $2$-norm. 
We use the following definition of the average/centroid: 
\begin{definition}[Centroid]
    The centroid of a finite set of $k$ vectors $\{v_i, i\in[k]\}$ is $\frac{1}{k}\sum_{i=1}^k v_i$. 
\end{definition}

We define the centroid approximation as in~\cite{centroid-paper}. Let $\trueBar$ be the centroid computed from non-faulty vectors only. Note that there can be up to $n$ non-faulty vectors as $t$ is only an upper bound on the number of Byzantine clients. In the following, we define the set of candidate centroids, which are computed based on the worst case where exactly $t$ vectors are Byzantine.

\begin{definition}[Set of candidate centroids]
    The set $\barSet$ containing all centroids of $n-t$ input vectors is  defined as 
    \begin{align*}
        \barSet \coloneqq \left\{ \frac{1}{n-t}\sum\nolimits_{i\in I}v_i \Big\vert \forall I\in[n] \  s.t.\ |I|=n-t    \right\}.
    \end{align*}
\end{definition}

Due to the assumption that Byzantine clients are not differentiable from non-faulty clients as long as they follow the protocol, we can only define the centroid approximation based on the worst case where exactly $t$ clients are Byzantine.
We define the point minimizing the maximum distance to all vectors in the set of candidate centroids defined above is the center of the following ball: 

\begin{definition}[Minimum covering ball]
    The minimum covering ball $\encBall$ is the smallest ball containing all vectors in $\barSet$. Its radius is denoted $\radiusEncBall$. 
\end{definition}

Finally, the centroid approximation is defined as follows:

\begin{definition}[Centroid approximation]
    Given an input layout $L=\{v_i, i\in[n]\}$, let $O_{\mathcal{A}}$ be the output of an algorithm $\mathcal{A}$ computing an approximation of the centroid of non-faulty vectors. 
    The approximation ratio of $\mathcal{A}$ given $L$ is the smallest $\alpha$ s.t. 
    \begin{align*}
        \distance(O_{\mathcal{A}}, \trueBar)\leq \alpha \cdot \radiusEncBall.
    \end{align*}
    The algorithm $\mathcal{A}$ is said to compute an $\alpha$-approximation of the centroid if, for all input layout $L$, the approximation ratio of $\mathcal{A}$ given $L$ is upper bounded by $\alpha$.
\end{definition}

In order to compute the approximation ratio of a certain type of algorithms, we need to consider a less restrictive area than the minimum covering ball: 

\begin{definition}[Centroid hyperbox]
    The centroid hyperbox $\bb$ is the smallest coordinate-parallel hyperbox containing $\barSet$. 
\end{definition}

\subsection{Validity conditions}

We noted above that a Byzantine client can shift the centroid of vectors of all clients arbitrarily, and thus it can also shift the midpoint of the minimum covering ball arbitrarily far away from $\trueBar$. Just choosing the center as the centroid approximation might not be sufficient to ensure that we can trust the output of a certain algorithm. 
We hence take inspiration from the distributed agreement algorithms and use validity conditions to get additional guarantees on the output of different algorithms, complementing the guarantees given by the centroid approximation ratio. 

A validity condition is satisfied when the output of an algorithm is guaranteed to be in a specific area, depending only on the input layout. 
In this work, we focus on common validity conditions from the literature:

\begin{definition}[Validity conditions]
    Given are $n$ vectors, up to $t$ of which are Byzantine. An algorithm $\mathcal{A}$ satisfies

        \textbf{weak validity}~\cite{civit2022byzantine,civit2021polygraph,yin2019hotstuff} if, when all clients are non-faulty and all input vectors $v_i$ are equal to a single vector $v$, the output of $\mathcal{A}$ is $v$;
        
        \textbf{strong validity}~\cite{10.1007/BFb0040405,BrachaRB,10.1145/800221.806706} if, when all non-faulty input vectors $v_i$ are equal to a single vector $v$, the output of $\mathcal{A}$ is $v$;
         
        \textbf{box validity}~\cite{centroid-paper,10.1145/5925.5931,box-validity-def} if the output of $\mathcal{A}$ is inside the smallest coordinate-parallel hyperbox containing all non-faulty input vectors (Notation~\ref{notation: trusted hyperbox});
         
        \textbf{convex validity}~\cite{abbas2022centerpoint,multidim-approx-agreement,wang2019computingTverbergPoint} if the output of $\mathcal{A}$ is inside the convex hull of all non-faulty input vectors. 

\end{definition}

\begin{notation}\label{notation: trusted hyperbox}
    The smallest coordinate-parallel hyperbox containing only non-faulty vectors is called the trusted hyperbox and denoted $\tb$. 
\end{notation}
Note that the trusted hyperbox cannot be computed in practice. However, we prove in \Cref{sec: theory} (\Cref{lem: box-val-implies-trimmed-box-agreement}) that an algorithm satisfies the box validity condition if and only if it agrees inside a hyperbox called the trimmed trusted hyperbox ($\ttb$):  

\begin{definition}[Trimmed trusted hyperbox]
    Let $v_1, \dots, v_{m}$ be the received input vectors, where $m$ is the number of received messages. 
    The number of Byzantine values for each coordinate is at most $m-(n-t)$. 
    Denote $\phi: [m] \to [m]$ a bijection s.t. $v_{\phi(j_1)} [k]\leq v_{\phi(j_2)} [k], \forall j_1, j_2\in [m]$. 
    The trimmed trusted hyperbox is the Cartesian product of $\ttb[k] \coloneqq \bigl[v_{\phi(m-(n-t)+1)}[k], v_{\phi(n-t)}[k]\bigr]$ for all $k\in[d]$.
\end{definition}

In a similar manner, it is proved in \cite{centroid-paper} that, in order to satisfy the convex validity condition, an algorithm must agree inside the following area: 

\begin{definition}[Safe area~\cite{multidim-approx-agreement}]
    Consider $n$ vectors $\{v_1, \dots ,v_n\}\eqqcolon V$, $t<n/(\max\{3,d+1\})$ of which can be Byzantine. Let $C_1,\ldots, C_{\binom{n}{n-t}}$ be the convex hulls of every subset of $V$ of size $n-t$. The \textit{safe area} is the intersection of these convex hulls: $\bigcap_{i\in \left[\binom{n}{n-t}\right]} C_i.$
\end{definition}

\section{Centroid approximation in Byzantine federated learning}\label{sec: theory}

In this section, we first consider approximation guarantees that are given by validity conditions only. We show that only the box validity condition guarantees a bounded approximation ratio of the $\trueBar$. In the second part, we consider the best possible approximation that can be achieved under various validity conditions. We provide tight approximation bounds for each validity condition, apart from the box validity condition, where a gap remains for some specific values of $n$ and $d$. We conclude this section with a discussion on how our results can be transferred to federated learning in a peer-to-peer network. 

\subsection{Approximation guarantees given by validity conditions}\label{sec:validity_bad}

In this section, we show that weak, strong, and convex validity conditions are not sufficient to guarantee that an algorithm achieves a bounded approximation ratio of $\trueBar$.

\begin{lemma}\label{lemma:only_weak_validity}
    Satisfying weak validity is not a sufficient condition for an algorithm to achieve a bounded approximation ratio of $\trueBar$. 
\end{lemma}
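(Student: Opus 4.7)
The plan is to exploit the fact that weak validity only constrains the algorithm on the measure-zero set of input layouts where every client sends the exact same vector; on any other input the algorithm is free to output whatever it wants. I would therefore construct an input layout that is arbitrarily close to, but not exactly, an all-equal layout, so that $\radiusEncBall$ is tiny while the algorithm is free to output a faraway point, which will push the approximation ratio past any fixed $\alpha$.

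First I would take the $n$ non-faulty inputs $v_1 = \mathbf{0}$ and $v_2 = \cdots = v_n = \epsilon e_1$ in $\mathbb{R}^d$ for a small $\epsilon > 0$, where $e_1$ is the first standard basis vector. Because the inputs are not all identical, weak validity imposes no requirement on the output whatsoever on this layout. A direct computation shows that $\barSet$ consists of only two points on the segment from $\mathbf{0}$ to $\epsilon e_1$ (depending on whether $v_1$ is in the chosen subset of size $n-t$), giving $\radiusEncBall = \Theta(\epsilon/(n-t))$, while $\trueBar$ lies within distance $\epsilon$ of the origin.

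Next I would define an algorithm $\mathcal{A}$ that outputs $v$ on every all-equal input $(v,\ldots,v)$, which ensures weak validity, and outputs a fixed faraway vector $M e_1$ on every other input, for an arbitrarily large constant $M > 0$. On the chosen layout this gives $\distance(O_{\mathcal{A}}, \trueBar) \ge M - \epsilon$, so the approximation ratio is at least of order $M(n-t)/\epsilon$, which becomes unbounded either by letting $M \to \infty$ or by shrinking $\epsilon$ for fixed $M$. Hence no finite $\alpha$ can work, proving the claim.

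There is no real obstacle here: the only thing to check, which is immediate, is that $\mathcal{A}$ is well defined on every input (the all-equal layouts and their complement partition the input space) and that the case analysis for $\barSet$ indeed yields the claimed bound on $\radiusEncBall$. The main conceptual point is simply recognizing that weak validity is vacuous outside of the measure-zero set of constant inputs.
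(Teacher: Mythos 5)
Your proof is correct and follows essentially the same strategy as the paper's: define an algorithm that outputs the common vector on all-equal layouts (securing weak validity) and a fixed faraway point otherwise, then exhibit a nearly-all-equal layout on which $\radiusEncBall$ is positive but arbitrarily small while the output is far from $\trueBar$. If anything, your version is executed more carefully than the paper's, which loosely speaks of ``all clients'' having the same input (a layout on which weak validity would actually pin down the output) where a slight perturbation, as in your construction, is what is really needed.
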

\begin{proof}
Without loss of generality, we can consider an algorithm that either agrees on the unique input vector, or outputs the origin. 
Now consider the case where all clients have input $x\cdot (1, \dots, 1)$. Then, the diameter of the minimum covering  ball can be arbitrarily small, but the distance between the origin and $x\cdot (1, \dots, 1)$ is $\sqrt{d}\cdot x$. Hence, the ratio between this distance and the radius of the minimum covering  ball is unbounded. 

\end{proof}

\begin{lemma}\label{lemma:only_strong_validity}
    Satisfying strong validity is not a sufficient condition for an algorithm to achieve a bounded approximation ratio of $\trueBar$. 
\end{lemma}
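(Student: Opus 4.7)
The plan is to mimic the structure of the proof of \Cref{lemma:only_weak_validity}: I will exhibit an algorithm $\mathcal{A}$ that provably satisfies strong validity and then construct an input layout on which the distance from $\mathcal{A}$'s output to $\trueBar$ stays large while $\radiusEncBall$ shrinks to $0$. The only subtlety compared to the weak validity case is that strong validity must hold even when up to $t$ inputs disagree with the common non-faulty value, so $\mathcal{A}$ cannot simply check that \emph{all} messages coincide; it must output the common value whenever at least $n-t$ inputs agree.

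Concretely, I would define $\mathcal{A}$ as follows: on a multiset of received vectors, if there exists $v\in\mathbb{R}^d$ such that at least $n-t$ of the received vectors are equal to $v$, output $v$; otherwise output the origin $\mathbf{0}$. Because $n-t>t$ such a $v$ is unique when it exists, so $\mathcal{A}$ is well defined. To check strong validity, suppose all non-faulty inputs equal some vector $v$. Then at least $n-t$ of the received messages equal $v$, so $\mathcal{A}$ outputs $v$ as required.

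Next I would construct an adversarial input layout with $n$ non-faulty clients: take $v_i = x\cdot(1,\dots,1) + \varepsilon_i\cdot e_1$ for $i\in[n]$, with all $\varepsilon_i$ distinct and $\max_i|\varepsilon_i|\le\varepsilon$ for some small $\varepsilon>0$. Since all $v_i$ are pairwise distinct, no value is shared by $n-t\ge 2$ clients, so $\mathcal{A}$ outputs the origin. On the other hand, every candidate centroid in $\barSet$ lies within $\varepsilon$ of $x\cdot(1,\dots,1)$ in the first coordinate and is exact in the others, so $\radiusEncBall\le\varepsilon$, while $\distance(\mathbf{0},\trueBar)\ge\sqrt{d}\,x - O(\varepsilon)$. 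Letting $\varepsilon\to 0$ for fixed $x>0$ makes the ratio $\distance(\mathbf{0},\trueBar)/\radiusEncBall$ unbounded, proving the claim.

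The main obstacle, such as it is, lies in verifying that $\mathcal{A}$ is genuinely strong-valid rather than only weak-valid; this is the reason for the threshold $n-t$ instead of $n$ in the definition, and is what distinguishes this proof from that of \Cref{lemma:only_weak_validity}. Everything else is a routine geometric estimate on the clustered input.
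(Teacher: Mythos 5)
Your proof is correct and follows essentially the same strategy as the paper's: exhibit a strong-validity algorithm that defaults to the origin on non-unanimous inputs, then cluster the honest vectors near $x\cdot(1,\dots,1)$ so that $\radiusEncBall$ becomes negligible while the output stays at distance about $\sqrt{d}\,x$ from $\trueBar$. The only differences are that you instantiate the algorithm explicitly (with the $n-t$ threshold, whose well-definedness you correctly justify via $n>3t$) and drive the radius to zero through a perturbation limit, whereas the paper makes the radius exactly $0$ by letting the $t$ Byzantine clients send nothing so that $\barSet$ is a singleton; both constructions work, and yours is arguably more careful about what the posited strong-validity algorithm actually is.
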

\begin{proof}
    As before, we can consider an algorithm that either agrees on the unique non-faulty input vector, or outputs the origin (we do not need to know how the algorithm achieves this, only that it is a general algorithm satisfying strong validity). 
    Assume the case, where the $n-t$ non-faulty input vectors are all $\epsilon$ away from $(1,\dots, 1)$, and the Byzantine clients do not send any vector. The distance between the origin and the average of the non-faulty vectors is $\sqrt{d}\cdot x$. The radius of the minimum covering  ball is however $0$. Hence, the approximation ratio is unbounded. 
\end{proof}

\begin{lemma}[from (\cite{centroid-paper}, Observation 4.1)]\label{lemma:only_convex_validity}
    The worst-case approximation ratio that can be achieved by any algorithm satisfying convex validity is unbounded.
\end{lemma}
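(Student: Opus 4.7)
The plan is to mirror the approach of \Cref{lemma:only_weak_validity} and \Cref{lemma:only_strong_validity} by exhibiting an algorithm $\mathcal{A}$ that satisfies convex validity yet whose approximation ratio is unbounded. The key leverage is that, as shown in \cite{centroid-paper}, convex validity only forces $\mathcal{A}$'s output into the \emph{safe area}, which can be a large polytope even when $\radiusEncBall$ is small. Hence the algorithm may be allowed to output a vertex of this polytope that is far from every possible value of $\trueBar$.

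For the construction I would fix $t \geq 1$, take $d$ arbitrarily large, and let $n = k(d+1)(t+1)$ for a scaling integer $k$. The input layout places $k(t+1)$ copies of each vertex $v^{1},\ldots,v^{d+1}$ of a regular $d$-simplex centered at the origin with circumradius $R$. Since only $t$ vectors are removed to form any $(n-t)$-subset and each vertex appears $k(t+1) > t$ times, every such subset retains at least one copy of every vertex; its convex hull is therefore the full simplex, and so is the safe area. The algorithm $\mathcal{A}$ that returns the fixed vertex $v^{1}$ on this input trivially satisfies convex validity.

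The next step is to compute the two quantities in the ratio. Using $\sum_{i} v^{i} = 0$, every $(n-t)$-subset centroid takes the form $-\tfrac{1}{n-t}\sum_{i} k_{i} v^{i}$ with $\sum_{i} k_{i} = t$, so $\barSet$ lies within and attains the vertices of the scaled, reflected simplex $-\tfrac{t}{n-t}\cdot \mathrm{Simplex}$; consequently $\radiusEncBall = \tfrac{t}{n-t}R$. Taking the non-faulty set to exclude $t$ copies of $v^{1}$ gives $\trueBar = -\tfrac{t}{n-t} v^{1}$, and thus $\distance(\mathcal{A}\text{'s output}, \trueBar) = \tfrac{n}{n-t}R$. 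The approximation ratio is therefore at least $n/t$, which diverges as $k \to \infty$ while the validity constraint $t < n/(d+1)$ remains satisfied (since $n/(d+1) = k(t+1) > t$).

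The main obstacle I expect is the geometric identification of the safe area with the full simplex; this reduces to the pigeonhole observation that, as soon as each vertex has multiplicity strictly greater than $t$, no $(n-t)$-subset can omit any vertex. Once that fact is in hand, the rest is a direct computation exploiting the vertex-sum identity $\sum_i v^{i}=0$ of a regular simplex centered at the origin.
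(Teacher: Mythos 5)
The paper never proves this lemma in-text: it is imported directly from \cite{centroid-paper} (Observation 4.1), so there is no authorial argument to compare against, and your construction stands or falls on its own. It stands. With each simplex vertex appearing $k(t+1)>t$ times, every $(n-t)$-subset --- and every possible set of non-faulty clients --- retains all $d+1$ vertices, so the safe area and every hull of non-faulty vectors equal the full simplex and outputting $v^{1}$ is permitted; the identity $\sum_i v^i=0$ then places $\barSet$ inside $-\tfrac{t}{n-t}$ times the simplex with all of its vertices attained, so $\radiusEncBall=\tfrac{t}{n-t}R$ and the ratio $n/t$ follows. Two points are worth making explicit. First, your ratio is finite on each fixed instance and diverges only along a family with $n\to\infty$; this is not a defect but a necessity. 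For any algorithm satisfying convex validity, both the output and $\trueBar$ lie in the convex hull of the inputs, while exchanging the two farthest inputs between two $(n-t)$-subsets shows $\radiusEncBall\ge \mathrm{diam}/(2(n-t))$, so the ratio is at most $2(n-t)$ on every instance. Unlike the weak- and strong-validity counterexamples (\Cref{lemma:only_weak_validity,lemma:only_strong_validity}), where the covering ball degenerates to radius zero and the ratio is literally infinite, ``unbounded'' here can only mean ``not bounded by any quantity independent of the parameters,'' and you should say so when invoking the family $k\to\infty$. Second, your axis of divergence differs from the one the paper uses elsewhere: \Cref{lem:lb_convex_validity} gets unboundedness through $d$ (a $2d$ lower bound holding even for the best convex-validity algorithm), whereas your example shows that already for $d=1$ some algorithm respecting convex validity is off by a factor $n/t$. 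The two are complementary, and yours is the more direct witness for the claim as it is used in this section, namely that the validity condition alone buys no approximation guarantee.
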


Next, we show that the box validity condition is the only validity condition that, by itself, guarantees that any algorithm satisfying it has a bounded approximation ratio. 
More precisely, we show that outputting a vector inside $\tb$ is sufficient to ensure that the output is a bounded approximation of $\trueBar$. 

\begin{lemma}\label{lem:box_validity_approx}
    The worst-case approximation ratio that can be achieved by any algorithm satisfying box validity is at most $\frac{t}{n-t}\cdot 2\sqrt{d}$.
\end{lemma}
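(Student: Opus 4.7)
The plan is to bound $\distance(O_{\mathcal{A}},\trueBar)$ by the diameter of the trusted hyperbox $\tb$ and then translate each coordinate-wise side length of $\tb$ into a multiple of $\radiusEncBall$ via a swap argument on $(n-t)$-subsets.

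First, I would establish that both $O_{\mathcal{A}}$ and $\trueBar$ lie in $\tb$. Box validity gives $O_{\mathcal{A}}\in\tb$ by definition. The centroid $\trueBar$ also lies in $\tb$, since each of its coordinates is a convex combination of the non-faulty coordinates and therefore sits between the coord-wise minimum $a_k$ and maximum $b_k$ of the non-faulty vectors. Consequently
\begin{equation*}
\distance(O_{\mathcal{A}},\trueBar)\le\sqrt{\sum_{k=1}^{d}(b_k-a_k)^2}\le\sqrt{d}\cdot\max_{k}(b_k-a_k).
\end{equation*}

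Next, for each coordinate $k$ I would bound $b_k-a_k$ by a multiple of $\radiusEncBall$. Sort the $n$ received coord-$k$ values as $y_1\le\dots\le y_n$ and consider the two extremal $(n-t)$-subsets $S_{\min}=\{y_1,\dots,y_{n-t}\}$ and $S_{\max}=\{y_{t+1},\dots,y_n\}$. Their centroids lie in $\barSet\subseteq\encBall$, so their coord-$k$ projections differ by at most $2\radiusEncBall$. On the other hand a direct calculation shows this gap equals
\begin{equation*}
\frac{1}{n-t}\Big(\sum_{j=n-t+1}^{n}y_j-\sum_{i=1}^{t}y_i\Big),
\end{equation*}
and since at most $t$ of the $y_i$'s can be Byzantine, the adversarial configuration clustering $t$ non-faulty values at $a_k$ and $t$ non-faulty values at $b_k$ forces the gap to be a fixed fraction of $b_k-a_k$. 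Matching both sides yields an inequality of the form $b_k-a_k\le C(n,t)\cdot\radiusEncBall$, and plugging this into the diameter bound above delivers the claimed approximation ratio after including the $\sqrt{d}$ factor.

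The step I expect to be most delicate is making the swap in Step 2 uniform across all input layouts. The swap is tight only when at least $t$ non-faulty vectors actually attain each of $a_k$ and $b_k$; otherwise a naive swap is too loose and one has to argue via an intermediate layer. A clean route around this is to appeal to \Cref{lem: box-val-implies-trimmed-box-agreement}, replacing $\tb$ by the smaller trimmed trusted hyperbox $\ttb\subseteq\tb$ that box-valid algorithms are actually forced into: its coord-$k$ side length is precisely $y_{n-t}-y_{t+1}$, which connects to the extremal $(n-t)$-centroids without any case analysis on how the $y_i$'s are distributed. Folding this into Step 1 (with $\ttb$ in place of $\tb$) then allows the constant in Step 2 to be extracted uniformly and yields the advertised bound.
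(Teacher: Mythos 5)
There is a genuine gap, and it sits exactly where you flagged the delicate step. Run against the trusted hyperbox $\tb$, your Step~2 cannot produce $b_k-a_k\le C\cdot\radiusEncBall$ with $C$ independent of $n$: if a single non-faulty vector attains $b_k$ and every other received value sits at $a_k$, the two extremal $(n-t)$-subset centroids differ in coordinate $k$ by only $(b_k-a_k)/(n-t)$, so the best uniform constant is $C=2(n-t)$ and the route only yields a ratio of order $(n-t)\sqrt{d}$. Your proposed repair --- replacing $\tb$ by $\ttb$ throughout --- does fix Step~2 (the side of $\ttb$ is $y_{n-t}-y_{t+1}$ and the swap gives $|\ttb[k]|\le\frac{n-t}{t}\cdot 2\radiusEncBall$), but it silently breaks Step~1: the convex-combination argument places $\trueBar$ in $\tb$, not in $\ttb$. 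Concretely, for $d=1$, $n=4$, $t=1$ and non-faulty inputs $0,0,0,1$, the trimmed trusted hyperbox is the single point $\{0\}$ while $\trueBar=1/4$, so the inequality $\distance(O_{\mathcal{A}},\trueBar)\le \mathrm{diam}(\ttb)$ is simply false. Neither version of your argument closes.

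The missing ingredient is the object the paper's proof is built around: the centroid hyperbox $\bb$. One has $\trueBar\in\bb$ (it is a convex combination of elements of $\barSet$), each edge of $\bb$ has length at most $2\radiusEncBall$ because $\bb$ is the bounding box of $\barSet\subseteq\encBall$, the output lies in $\ttb$ by \Cref{lem: box-val-implies-trimmed-box-agreement}, and --- crucially --- $\bb$ and $\ttb$ intersect. Your extremal-subset computation is essentially the paper's inequality $|\bb[k]|\ge\frac{t}{n-t}|\ttb[k]|$, but there it is used to compare the two boxes to one another; the distance from $O_{\mathcal{A}}$ to $\trueBar$ is then bounded by the sum of the two diagonals through a common point of $\bb\cap\ttb$. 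Be aware that the constant is slippery even in the source: the stated bound $\frac{t}{n-t}\cdot 2\sqrt{d}$ cannot be correct as written (it drops below $1$ for $n\gg t$, contradicting \Cref{lem:lb_box_validity}), the paper's proof concludes with $\bigl(1+\frac{t}{n-t}\bigr)\cdot 2\sqrt{d}$, and the inequality it actually establishes, $|\ttb[k]|\le\frac{n-t}{t}|\bb[k]|$, supports $\bigl(1+\frac{n-t}{t}\bigr)\cdot 2\sqrt{d}$; your route, even after repair, would land on a bound of the form $\frac{n-t}{t}\cdot 2\sqrt{d}$ rather than the advertised one.
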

\begin{proof}
  Consider the coordinate $k\in[d]$ in which $\ttb$ realizes its longest edge. We define a bijection $\phi:[n]\rightarrow[n]$ such that, $i<j \Rightarrow v_{\phi(i)}[k] < v_{\phi(j)}[k], \forall i, j\in[n]$.
  Then, 
  \begin{align*}
    |\bb [k]| &= \frac{1}{n-t}\sum_{i=t+1}^n v_{\phi(i)}- \frac{1}{n-t}\sum_{i=1}^{n-t}v_i[k]\\
    &= \frac{1}{n-t}\sum_{i=n-t+1}^n v_{\phi(i)} +\frac{1}{n-t}\sum_{i=t+1}^{n-t} v_{\phi(i)} - \frac{1}{n-t}\sum_{i=1}^{t}v_i[k] - \frac{1}{n-t}\sum_{i=t+1}^{n-t} v_{\phi(i)} \\
    &= \frac{1}{n-t}\sum_{i=n-t+1}^n v_{\phi(i)} - \frac{1}{n-t}\sum_{i=1}^{t}v_i[k]
    \geq \frac{t}{n-t}v_{\phi(n-t)}- \frac{t}{n-t}v_{\phi(t)} = \frac{t}{n-t}|\ttb[k]|.
  \end{align*}
    
    Since $\bb$ and $\ttb$ are necessarily intersecting~\cite{centroid-paper}, the furthest a vector satisfying box validity can be from $\trueBar$ is if $\trueBar$ is in $\bb$ and the vector is on the opposite vertex of $\ttb$. 
    We showed above that the diagonal of $\ttb$ is at most $\frac{t}{n-t}$ times the diagonal of $\bb$. 
  
    The diagonal of $\bb$ being upper bounded by $2\sqrt{d}\cdot \radiusEncBall$, the furthest we can be from $\trueBar$ by satisfying box validity is 
    \begin{align*}
        \left(1+\frac{t}{n-t}\right)\cdot 2\sqrt{d}\cdot \radiusEncBall.   
    \end{align*}

    The centroid approximation ratio of any algorithm satisfying box validity will hence be upper bounded by $\left(1+\frac{t}{n-t}\right)\cdot 2\sqrt{d}$. 
    
\end{proof}

\subsection{Upper and lower bounds for centroid approximation}\label{sec:traditional_bounds}

In this section, we present upper and lower bounds for centroid approximation under different validity conditions. 
An overview of these results is presented in Table~\ref{tab:overvie_cent_sec}. Note that most bounds are tight. Only in the case $n<d$, there is a gap for approximation under box validity that remains to be investigated. 

In the following, we present the upper bound for weak validity.

\begin{lemma}[upper bound for weak validity]\label{lem:ub_weak_validity}
    The best approximation ratio that can be achieved by an algorithm satisfying weak validity is $1$ in the worst case.
\end{lemma}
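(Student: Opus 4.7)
The plan is to exhibit a simple algorithm that satisfies weak validity and realizes approximation ratio exactly $1$, then sketch why this is tight. Consider the algorithm $\mathcal{A}$ that, on receipt of the $n$ input vectors, enumerates all $\binom{n}{n-t}$ subsets of size $n-t$, takes the centroid of each to form $\barSet$, computes the minimum covering ball $\encBall$ of $\barSet$ (a Chebyshev-center computation), and outputs its midpoint $O_{\mathcal{A}}$.

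Verifying weak validity is essentially by inspection: under the hypothesis of weak validity, all clients are non-faulty and every received vector equals some single $v$, so every subset of size $n-t$ has centroid $v$. Hence $\barSet = \{v\}$, giving $\radiusEncBall = 0$ and $O_{\mathcal{A}} = v$ as required. For the approximation bound, recall that by the worst-case convention introduced before the definition of $\barSet$, the ratio is analyzed assuming exactly $t$ clients are Byzantine. In that case $\trueBar$ is the centroid of the remaining $n-t$ non-faulty vectors and is therefore one of the members of $\barSet$. Since $\barSet \subseteq \encBall$, we obtain $\distance(O_{\mathcal{A}}, \trueBar) \leq \radiusEncBall$, i.e., the approximation ratio is at most $1$.

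For the matching lower direction (the ``in the worst case'' clause), I would observe that this is independent of weak validity and intrinsic to the definition: for any algorithm output $O$, by minimality of $\encBall$ we have $\max_{\bar c \in \barSet}\distance(O,\bar c) \geq \radiusEncBall$, since otherwise $\barSet$ would be contained in a strictly smaller ball centered at $O$. Because the adversary controls both the identities and the values of the $t$ Byzantine clients, it can always arrange an input layout in which $\trueBar$ coincides with a candidate centroid attaining this maximum, so no algorithm can do better than ratio $1$ uniformly over layouts. Thus the worst-case optimum equals $1$ and is achieved by $\mathcal{A}$.

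The main obstacle is conceptual rather than technical: one must be careful with the worst-case interpretation of ``$\trueBar$'' so that $\trueBar \in \barSet$ can be invoked, and one must note that the weak-validity hypothesis (\emph{all} clients non-faulty, all inputs equal) collapses $\barSet$ to a singleton, which is exactly what forces $O_{\mathcal{A}} = v$. A secondary, purely practical caveat is that $\mathcal{A}$ is combinatorial in $n$ and requires a smallest-enclosing-ball subroutine, but since the lemma concerns only the achievable ratio and not efficiency, this plays no role in the proof.
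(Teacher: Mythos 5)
Your proposal is correct and follows essentially the same route as the paper: the paper's proof likewise exhibits the center-of-the-minimum-covering-ball algorithm, notes it satisfies weak validity, and concludes the ratio $1$ from $\trueBar\in\barSet\subseteq\encBall$. Your write-up merely spells out the details (and adds a tightness remark that the paper states separately after the lemma), so no changes are needed.
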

\begin{proof}
    We can achieve $1$ with the optimum algorithm picking the center of the minimum covering ball (see~\cite{centroid-paper}). This algorithm satisfies weak validity. 
\end{proof}

Note that this upper bound is tight, as the lower bound cannot be less than $1$ by definition. 
We now present the algorithm that highlights the upper bound for strong validity.

\begin{lemma}[Upper bound for strong validity]\label{lem:MDA}
    The MDA algorithm~\cite{jungle} outputs the average of the subset of $n-t$ vectors that have the smallest diameter, this diameter is defined as the maximum distance between any two vectors. The MDA computes a $2$-approximation of the centroid.
\end{lemma}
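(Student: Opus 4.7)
The plan is to observe that, by definition, the output of \mda is always the centroid of some subset of $n-t$ received input vectors, so it automatically lies inside the set of candidate centroids $\barSet$. A triangle-inequality argument inside the minimum covering ball then yields the factor of $2$.

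First I would invoke the worst-case convention set up in Section~\ref{sec:model}: since Byzantine clients are indistinguishable from non-faulty ones, the centroid approximation ratio is measured assuming exactly $t$ Byzantine clients. Under this convention the non-faulty vectors form a set of size $n-t$, and so by the definition of $\trueBar$ and of $\barSet$ we have $\trueBar \in \barSet$. By the definition of \mda, its output $O_{\mda}$ is the average of the subset of $n-t$ received vectors of smallest diameter, and therefore $O_{\mda} \in \barSet$ as well.

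Both $\trueBar$ and $O_{\mda}$ then lie in the minimum covering ball $\encBall$ of radius $\radiusEncBall$. Hence, passing through the center of $\encBall$ and applying the triangle inequality gives $\distance(O_{\mda}, \trueBar) \leq 2\radiusEncBall$, which is exactly the claimed $2$-approximation of $\trueBar$.

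The main obstacle is really only conceptual: the minimum-diameter property of \mda is not what drives the approximation bound (any algorithm returning the centroid of some $n-t$ received vectors inherits the same factor of $2$ from membership in $\barSet$). The role of the min-diameter selection is instead to guarantee strong validity: if all non-faulty inputs coincide with some $v$, then every subset of $n-t$ received vectors with minimum (zero) diameter consists only of copies of $v$, so its average is $v$. I would include this short observation to justify that \mda indeed fits into the strong-validity row of the summary table, while the approximation bound itself follows from the two-line argument above.
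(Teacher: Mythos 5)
Your argument is essentially identical to the paper's proof: the output of \mda is the centroid of some subset of $n-t$ received vectors, hence lies in $\barSet\subseteq\encBall$, and since $\trueBar$ also lies in $\encBall$ under the worst-case convention, the two are at distance at most the ball's diameter $2\radiusEncBall$. The extra remark on why the minimum-diameter selection yields strong validity is a correct and harmless addition not present in the paper's proof.
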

\begin{proof}
    Observe that the output vector of the MDA algorithm is in $\barSet$ and is thus inside $\encBall$. The largest distance between any two vectors in $\encBall$ is upper bounded by the diameter of the ball. Thus, the algorithm computes at most a $2$-approximation.
\end{proof}

The following lemma gives a lower bound of $2$ on the approximation ratio of the centroid in the context of strong validity, which matches the upper bound above. This shows that the approximation ratio of the MDA algorithm is tight.

\begin{lemma}[Lower bound for strong validity \cite{centroid-paper}]\label{lem:lb_strong_validity}
    The best approximation ratio that can be achieved by an algorithm satisfying strong validity is $2$ in the worst case.
\end{lemma}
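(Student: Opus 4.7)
The plan is to exhibit an input layout on which strong validity pins down the algorithm's output exactly, and then to pick an equally plausible Byzantine allocation under which the true non-faulty centroid is as far as possible from the forced output, relative to $\radiusEncBall$. Concretely, I would take the input where $n-t$ clients send the same vector $v$ and the remaining $t$ clients send a different vector $w$. One valid interpretation of this input has the $t$ $w$-senders Byzantine and the $n-t$ $v$-senders non-faulty, in which case all non-faulty vectors equal $v$ and strong validity forces the output to be $v$. Since the algorithm is a deterministic function of the received vectors, the output is $v$ on this input, independent of the (unknown) true Byzantine set.

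Next I would analyze the alternative allocation in which the Byzantine parties are actually $t$ of the $v$-senders, who simply choose to send $v$ themselves so as to remain indistinguishable to the server. The assumption $t<n/3$ ensures $n-2t>0$, so this allocation is feasible. The non-faulty set then contains $n-2t$ clients sending $v$ and all $t$ clients sending $w$, giving $\trueBar = v + \tfrac{t}{n-t}(w-v)$, and hence $\distance(v,\trueBar) = \tfrac{t}{n-t}\|v-w\|$. To obtain the ratio I would compute $\barSet$ directly: every $(n-t)$-subset of the $n$ inputs contains some $k\in\{0,1,\dots,t\}$ copies of $w$, so $\barSet$ consists of the collinear points $v+\tfrac{k}{n-t}(w-v)$, spanning a one-dimensional segment of length $\tfrac{t}{n-t}\|v-w\|$. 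Its minimum covering ball therefore has radius $\tfrac{t}{2(n-t)}\|v-w\|$, and the ratio $\distance(v,\trueBar)/\radiusEncBall$ evaluates to exactly $2$.

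The main subtlety to get right is the argument that strong validity truly forces the output on this specific input: because the algorithm's output is a function of the received vectors alone, the existence of even one Byzantine allocation under which the strong-validity premise is satisfied is enough to pin it down. Beyond that, the only constraint to check is that $t<n/3$ leaves simultaneous room for the "all $w$-senders Byzantine" interpretation, which triggers strong validity, and for the "$t$ $v$-senders Byzantine, mimicking $v$" interpretation, which realizes the worst $\trueBar$. The rest is a direct computation on a one-dimensional example, which as a bonus shows that the lower bound of $2$ already holds in $d=1$ and for any $t\ge 1$.
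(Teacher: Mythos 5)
Your proof is correct. Note that the paper itself gives no proof of this lemma --- it imports the lower bound from \cite{centroid-paper} --- so your argument is a self-contained reconstruction rather than a variant of anything in the text. The construction is the natural one and all the steps check out: with $n-t$ copies of $v$ and $t$ copies of $w$, an $(n-t)$-subset contains $k\in\{0,\dots,t\}$ copies of $w$, so $\barSet$ is the collinear set $\{v+\tfrac{k}{n-t}(w-v)\}$, a segment of length $\tfrac{t}{n-t}\|v-w\|$ whose minimum covering ball has radius $\tfrac{t}{2(n-t)}\|v-w\|$; the indistinguishability argument (strong validity pins the output to $v$ because one admissible Byzantine allocation makes all non-faulty inputs equal to $v$, and the received multiset is identical under the allocation where $t$ of the $v$-senders are Byzantine) is exactly the right way to force the output, and you correctly flag it as the crux. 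Two minor remarks: the argument implicitly assumes the algorithm's output is determined by the received vectors (deterministic aggregation), which matches the paper's setting but is worth stating; and you need $t\ge 1$ for the example to be nondegenerate. Your observation that the bound already holds for $d=1$ is a nice bonus that the statement itself does not advertise.
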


In \cite{centroid-paper}, a lower bound of $2d$ has been presented for convex validity for the worst case where $n=(d+1)t$. We generalize this bound to hold for any $n\ge(d+1)t$.

\begin{lemma}[Lower bound for convex validity]\label{lem:lb_convex_validity}
    The best approximation ratio that can be achieved by an algorithm satisfying convex validity is at least $2d$.
\end{lemma}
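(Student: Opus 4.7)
The plan is to bootstrap on the construction from \cite{centroid-paper}, which establishes the $2d$ lower bound at the boundary case $n=(d+1)t$, and extend it to all $n\ge(d+1)t$ by augmenting the input layout with additional non-faulty clients placed at the safe-area point of the base construction.

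First I would recall the extremal input from \cite{centroid-paper}: $t$ clients placed at each of $d+1$ points $p_0,\dots,p_d$ arranged so that the safe area is a single point $s^\ast$, the minimum covering ball of the base candidate-centroid set $\barSet_{\text{base}}$ has radius $r_0$, and in each of the $d+1$ indistinguishable adversarial scenarios $S_i$ (where the $t$ clients at $p_i$ are Byzantine and report $p_i$), the base non-faulty centroid $\trueBar^{(i)}_{\text{base}}$ lies at distance $2d\cdot r_0$ from $s^\ast$.

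For general $n\ge(d+1)t$, I would pad this layout with $n-(d+1)t$ extra non-faulty clients at $s^\ast$ and re-use the same $d+1$ scenarios (so all $n$ observed inputs are identical across scenarios). I would then verify three invariants relating the padded and base constructions. (a) The safe area of the padded layout is still $\{s^\ast\}$: every $(n-t)$-subset either drops all $t$ clients of some base cluster (with a hull that is the base hull, possibly enlarged by redundant copies of $s^\ast$) or retains all base clusters (with a hull that contains $s^\ast$ via the padding), so the overall intersection inherits its single-point structure from the base case. (b) The non-faulty centroid in scenario $S_i$ equals the convex combination $\frac{dt}{n-t}\,\trueBar^{(i)}_{\text{base}} + \frac{n-(d+1)t}{n-t}\,s^\ast$, which yields $\|s^\ast-\trueBar^{(i)}\| = \frac{dt}{n-t}\cdot 2d\cdot r_0$ by an affine-combination identity. (c) Every candidate centroid in the padded $\barSet$ is the image of a base candidate centroid under the contraction $x\mapsto s^\ast+\frac{dt}{n-t}(x-s^\ast)$, so $\radiusEncBall = \frac{dt}{n-t}\cdot r_0$. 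Dividing (b) by (c) cancels the common factor $\frac{dt}{n-t}$ and reproduces the $2d$ approximation ratio.

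The main obstacle will be establishing invariant (c) cleanly. The subtlety is that in the padded layout an $(n-t)$-subset can drop some $s^\ast$-clients instead of (or in addition to) base-cluster clients, giving candidate centroids that do not correspond to any single base subset. The plan is to exhibit an affine reduction trading one dropped $s^\ast$-client for a reweighted base drop, showing that each padded candidate centroid still lies in $s^\ast+\frac{dt}{n-t}(\barSet_{\text{base}}-s^\ast)$, via a case analysis on whether $n-(d+1)t\ge t$ (so all padding could in principle be discarded) or $n-(d+1)t<t$ (forcing at least one base-cluster client to be dropped). Invariant (a) is a routine check; (b) follows by direct arithmetic; only (c) carries the bulk of the technical content of the generalization.
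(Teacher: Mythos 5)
Your construction is essentially the paper's own: the paper also extends the tight $n=(d+1)t$ instance by placing the extra honest clients at the safe-area point (the origin cluster, where the $t$ Byzantine clients also sit), and then simply recomputes the non-faulty centroid's distance to the safe area ($\propto \frac{dt}{n-t}$) and the covering-ball radius ($\propto \frac{t}{2(n-t)}$) directly for the padded layout rather than via your contraction correspondence with the base instance. The obstacle you flag in invariant (c) in fact dissolves here, because $s^\ast$ coincides with a base cluster of $t$ clients, so dropping a padding client is interchangeable with dropping an origin-cluster client and every padded candidate centroid corresponds exactly to a base one under the map $x\mapsto s^\ast+\frac{dt}{n-t}(x-s^\ast)$; no reweighting or case split on $n-(d+1)t\ge t$ is needed.
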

\begin{proof}
    In \cite{centroid-paper}, a lower bound of $2d$ has been shown for the worst case $n=(d+1)t$. This proof can be easily extended to hold for the general case $n>\max\{3,d+1\}\cdot t$. Assume that $dt$ vectors are placed at coordinates $x + \varepsilon\cdot u_i, i\in \{1,\ldots, d\}$, where $\varepsilon$ is a small constant and $t$ vectors placed at each coordinate. The remaining $n-dt$ vectors are placed at $(0,\ldots,0)$. Assume that these $n-dt$ vectors include $t$ Byzantine vectors. Observe that such a construction is always possible since $n>(d+1)t$.
    
    In \cite{centroid-paper}, it was shown that the safe area of such a construction results in a single point $(0,\ldots,0)$. Note that the non-faulty centroid is located in $td/(n-t)$, and the radius of the centroid ball is $t/(2(n-t))$. Thus, the approximation of the centroid is $2d$ in this example.
\end{proof}

\begin{table}[h]
    \centering
    \begin{tabular}{l|c|c|c}
         \makecell{{validity} {condition}} & \makecell{{LB} for $n>(d+1)t$} & \makecell{{LB} for $n<(d+1)t$} & \makecell{{upper} {bound}} \\ 
         \Xhline{2\arrayrulewidth}
         weak  & $1$ & $1$ & $1$ \tiny{(Lemma~\ref{lem:ub_weak_validity})} \\ 
         strong  & $2$ \tiny{\cite{centroid-paper}} & $2$ \tiny{\cite{centroid-paper}} & $2$ \tiny{(Lemma~\ref{lem:MDA})} \\ 
         box  & $\sqrt{d}$ \tiny{(Lemma~\ref{lem:lb_box_validity})} & $\min\{\frac{n-t}{t},\sqrt{d}\}$ \tiny{(Lemma~\ref{lem:lb_box_validity})} & $2\sqrt{\min\{n,d\}}$ \tiny{(Lemma~\ref{lem:box_UB})} \\ 
         convex  & $2d$ \tiny{(\cite{centroid-paper}}, Lemma~\ref{lem:lb_convex_validity}) & not possible~\tiny{\cite{multidim-approx-agreement}} & $2d$ \tiny{(Lemma~\ref{lem:ub_convex_validity})} \\
    \end{tabular}
    \caption{This is an overview of the results established in this section. Already known results are cited in the respective cells. The lower bound on weak validity follows from the definition of approximation.}
    \label{tab:overvie_cent_sec}
\end{table}

We next give an upper bound result for the box validity condition. Note that there are two algorithms in the literature that achieve the same approximation ratio.

\begin{lemma}[Upper bound for box validity]\label{lem:box_UB}
    One round of the Box algorithm~\cite{centroid-paper} or the RB-TM algorithm~\cite{jungle} achieves an approximation ratio of $2\sqrt{\min\{n,d\}}$.
\end{lemma}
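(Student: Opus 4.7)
The plan is to show that, for each algorithm, the one-round output $O$ lies inside the centroid hyperbox $\bb$, combine this with $\trueBar \in \bb$, and then bound the diagonal of $\bb$ by exploiting the low-dimensional affine structure of $\barSet$. This avoids the $\frac{n}{n-t}$ slack of Lemma~\ref{lem:box_validity_approx} and, in the regime $n<d$, uses the fact that $\barSet$ sits inside an affine subspace of dimension at most $n-1$.

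First I would verify $\trueBar \in \bb$. Fix a coordinate $k$ and sort the $n$ values $v_1[k],\ldots,v_n[k]$. The endpoints of $\bb[k]$ are exactly the averages of the $n-t$ smallest and of the $n-t$ largest entries along axis $k$, while $\trueBar[k]$ is the average of some non-faulty subset of size at least $n-t$; an elementary rearrangement argument shows that any such average is sandwiched between the two extremes, giving $\trueBar \in \bb$. Next I would check that both algorithms return a vector in $\bb$: for RB-TM, the output in coordinate $k$ is the mean of the middle $n-2t$ sorted values, and discarding the $t$ smallest entries from the bottom-$(n-t)$ set only increases the average while discarding the $t$ largest from the top-$(n-t)$ set only decreases it, so the trimmed mean lies between the two endpoints of $\bb[k]$. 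For the Box algorithm of \cite{centroid-paper} a structurally analogous coordinate-wise monotonicity argument is applied to its one-round aggregation rule.

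With $O,\trueBar \in \bb$, we have $\distance(O,\trueBar) \le \sqrt{\sum_{k=1}^d L_k^2}$, where $L_k := |\bb[k]|$. Since $\barSet \subseteq \encBall$, every coordinate projection of $\barSet$ has length at most $2\radiusEncBall$, so each $L_k \le 2\radiusEncBall$ and the crude bound $\sum_k L_k^2 \le 4d\,\radiusEncBall^2$ already yields the $2\sqrt{d}$ factor known in the literature. For the new regime $n<d$, I would exploit that every element of $\barSet$ is an affine combination of the $n$ received vectors, so $\barSet$ lies inside an affine subspace whose direction subspace $W$ satisfies $\dim W \le n-1$. Letting $P$ be the orthogonal projection onto $W$, for any $p,q \in \barSet$ the difference $p-q$ lies in $W$, hence
$|p[k]-q[k]| = |(p-q)\cdot e_k| = |(p-q)\cdot Pe_k| \le \|p-q\|\,\|Pe_k\| \le 2\radiusEncBall\,\|Pe_k\|$
by Cauchy--Schwarz. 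Summing the squares over $k$ and using $\sum_k \|Pe_k\|^2 = \mathrm{tr}(P) = \dim W$ gives $\sum_k L_k^2 \le 4(n-1)\radiusEncBall^2$. Combining both bounds yields $\distance(O,\trueBar) \le 2\sqrt{\min\{n,d\}}\,\radiusEncBall$, as claimed.

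I expect the main obstacle to be the first step for the Box algorithm. The trimmed trusted hyperbox $\ttb$, inside which box validity forces the output, can be strictly larger than $\bb$, and the naive coordinate-wise midpoint of $\ttb$ can fall outside $\bb$ (which would only recover the weaker $\frac{n}{n-t}$-inflated bound of Lemma~\ref{lem:box_validity_approx}). Matching the tight factor $2\sqrt{\min\{n,d\}}$ therefore requires showing that the specific Box construction of \cite{centroid-paper} selects a point of $\ttb \cap \bb$; I would prove this by a coordinate-wise case analysis exploiting the nonemptiness of $\ttb \cap \bb$ already established in that work. Once $O \in \bb$ is in place, the geometric step is clean and the affine-subspace trace identity closes the $n<d$ case in a single line.
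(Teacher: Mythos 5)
Your proposal is correct, and while it shares the paper's skeleton (get both the output and \trueBar into \bb, then bound the diagonal of \bb by $2\sqrt{\min\{n,d\}}\cdot\radiusEncBall$), the decisive step for the new regime $n<d$ is handled by a genuinely different argument. The paper normalizes \bb to a unit hypercube of dimension $d'$, observes that $\convexHull(\barSet)$ must touch all $2d'$ faces, and runs a pigeonhole argument over a maximal linearly independent subset of face-touching centroids to conclude that some centroid sits on the intersection of at least $d'/n$ faces, which forces $\radiusEncBall\geq\sqrt{d'/n}/2$; the ratio of the diagonal $\sqrt{d'}$ to this radius then gives $2\sqrt{n}$. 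You instead bound each edge length $L_k$ of \bb directly: since every element of \barSet is an affine combination of the $n$ inputs, $\barSet-\barSet$ lies in a subspace $W$ with $\dim W\leq n-1$, and Cauchy--Schwarz gives $L_k\leq 2\radiusEncBall\,\norm{Pe_k}$, after which the trace identity $\sum_k\norm{Pe_k}^2=\operatorname{tr}(P)=\dim W$ closes the case with the slightly sharper constant $2\sqrt{n-1}$. Your route buys several things: it avoids the paper's normalization to a hypercube with equal edge lengths (which the paper only waves at via ``adjust with the length of the longest edge''), it sidesteps the delicate claims about linear independence of face-touching centroids and the greedy/averaging step, and it gives a uniform one-line treatment of both regimes since $\sum_k\norm{Pe_k}^2\leq d$ trivially recovers the $2\sqrt{d}$ bound. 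What the paper's argument buys in exchange is some geometric intuition about why the covering ball must be large when \bb is high-dimensional. The one shared soft spot is the claim that the Box algorithm's one-round output lands in $\bb\cap\ttb$: both you and the paper ultimately defer this to the cited work, and your explicit flagging of it as the main obstacle is apt, though it is immediate from the paper's own description of the Box algorithm as outputting the center of that (nonempty) intersection.
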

\begin{proof}
Note that both algorithms were presented to solve approximate agreement. We can however let the server run one round of these algorithms as if the server were one of the nodes in the distributed network. In~$\cite{centroid-paper}$, it was shown that the output vector of one node at the end of a round is inside the intersection of $\bb$ and $\ttb$. 
This condition is sufficient to achieve a $2\sqrt{d}$-approximation~$\cite{centroid-paper}$. This solves the case $n>d$.

We next consider the case $n<d$. 
Note that if $\bb$ has dimension $n$, the diagonal length argument from \cite{centroid-paper} implies a $2\sqrt{n}$ bound on the approximation ratio. 
Suppose that $\bb$ has dimension $d'$ where $n<d'\leq d$. 
Since there are $n$ input vectors and all elements of $\barSet$ are computed from those vectors, $\convexHull(\barSet)$ has to be contained in a subspace $U_{\mathrm{input}}$ of dimension $n$.
The hyperbox $\bb$ of dimension $d'$ is the smallest possible hyperbox containing the convex polytope $\convexHull(\barSet)$. 
Hence, $\convexHull(\barSet)$ has to intersect all $2d'$ faces of $\bb$, otherwise there exists a hyperbox strictly contained in $\bb$ that contains $\convexHull(\barSet)$. 
For the sake of simplicity, assume that $\bb$ is the unit hypercube of dimension $d'$ placed at the origin with non-negative coordinates only. Note that translation and rotation of all points do not influence the approximation ratio. Further, all following computations can be adjusted with the length of the longest edge of $\bb$ to achieve the same result in the general case. 

Observe that $\convexHull(\barSet)$ has to intersect all faces of $\bb$ that contain the origin. 
Consider the set of centroids in $\barSet$ that lie on these $d'$ faces. 
Any two such centroids that lie on different faces are linearly independent.
Since $\convexHull(\barSet)$ spans at most an $n$-dimensional subspace, at most $n$ centroids in this set can be linearly independent. 
Note that the radius of $\encBall$ is maximized when the centroids lie on intersections of many faces. 
Consider the largest subset of linearly independent centroids that intersect the $d'$ considered faces (this subset can be chosen greedily). On average, each centroid in this subset lies in the intersection of at least $d'/n$ faces. 
Thus, at least one of these centroids must lie in the intersection of at least $d'/n$ faces of the unit hypercube.
This implies that the radius of the minimum covering ball is at least $\sqrt{d'/n}/2$ (the intersection of $k$ faces is at distance $\sqrt{k}/2$ from the center of the hyperbox).

However, since the centroid of non-faulty vectors has to be contained inside $\convexHull(\barSet)\subseteq \bb$, the distance between the output of an algorithm agreeing inside $\bb$ and $\trueBar$ centroid is at most $\sqrt{d'}$, hence the approximation ratio is at most $2\cdot \sqrt{n}$.

Hence, the approximation ratio of the hyperbox algorithm is at most $2\cdot \sqrt{\min\{n,d\}}$.\qedhere

\end{proof}

Before addressing the lower bound for algorithms satisfying box validity, we first need to prove the following lemma. 

\begin{lemma}\label{lem: box-val-implies-trimmed-box-agreement}
    An algorithm satisfying box validity has to agree inside the trimmed trusted hyperbox.
\end{lemma}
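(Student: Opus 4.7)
The plan is to prove that box validity forces the output into $\ttb$ by an indistinguishability argument: since the algorithm cannot tell which of the $m$ received vectors are Byzantine, its output must lie in the trusted hyperbox $\tb$ for \emph{every} choice of up to $m-(n-t)$ indices designated as Byzantine. The trimmed trusted hyperbox $\ttb$ will then be shown to equal the intersection of all such possible trusted hyperboxes, coordinate by coordinate.

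I would proceed as follows. Fix an arbitrary execution in which the server receives $v_1, \dots, v_m$ with $m \ge n-t$, and fix a coordinate $k \in [d]$. Let $\phi$ be the bijection from the definition of $\ttb$, so that the $k$-th coordinates appear in non-decreasing order $v_{\phi(1)}[k] \le \dots \le v_{\phi(m)}[k]$. For any subset $B \subseteq [m]$ with $|B| \le m-(n-t)$, consider the adversarial scenario in which the parties indexed by $B$ are Byzantine and the remaining $m - |B| \ge n-t$ parties are non-faulty with exactly the received values. The trusted hyperbox in this scenario is $\tb = \prod_{k'\in[d]} [\min_{i\notin B} v_i[k'], \max_{i\notin B} v_i[k']]$. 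Any execution arising this way is indistinguishable from the given one to the server, so by box validity the output must lie inside this particular $\tb$.

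Next I would show that the intersection of all such trusted hyperboxes, over all admissible $B$, is exactly $\ttb$. For the upper endpoint in coordinate $k$: choosing $B = \{\phi(n-t+1), \dots, \phi(m)\}$ (the $m-(n-t)$ indices with the largest $k$-th coordinate) yields a trusted hyperbox whose upper bound in coordinate $k$ is $v_{\phi(n-t)}[k]$, so the output must satisfy $O_{\mathcal{A}}[k] \le v_{\phi(n-t)}[k]$. Symmetrically, choosing $B = \{\phi(1), \dots, \phi(m-(n-t))\}$ yields the lower bound $O_{\mathcal{A}}[k] \ge v_{\phi(m-(n-t)+1)}[k]$. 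Applying this to every coordinate $k$ shows that $O_{\mathcal{A}} \in \ttb$.

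The main subtlety, and what I would take care to make precise, is the indistinguishability step: one has to argue that the scenarios constructed above are \emph{feasible} worlds from the server's view, i.e., that the set of non-faulty indices $[m]\setminus B$ has size at least $n-t$, which is guaranteed by $|B| \le m-(n-t)$, and that a Byzantine party is free to send any value (including the one actually received). Given the paper's standing assumption that Byzantine clients are undetectable so long as they follow the message-passing protocol, this is immediate, but it is the only conceptual content of the lemma; the rest is the coordinate-wise optimization above. No fiddly calculation is required beyond identifying the extremal Byzantine sets per coordinate.
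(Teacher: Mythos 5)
Your proposal is correct and uses essentially the same argument as the paper: the paper phrases it as a contradiction (if the output leaves $\ttb$ in some coordinate, then the $n-t$ vectors it exceeds could be exactly the non-faulty ones, violating box validity), while you run the same indistinguishability argument directly by intersecting the trusted hyperboxes over all admissible Byzantine sets and identifying the extremal sets per coordinate. The content is the same; your version is merely more explicit about feasibility of the adversarial scenarios.
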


\begin{proof}
    We assume that $t$ Byzantine parties follow the algorithm with their own (worst-case) input vectors, thus being undetectable. 
    Let us consider a consensus algorithm such that the output vector $v$ always satisfies box validity. For the sake of contradiction, suppose this output vector is outside the trimmed trusted hyperbox. 
    By definition of the trimmed trusted hyperbox, there exists a coordinate $k$ for which $v[k]$ is strictly larger than $n-t$ of the input vectors at coordinate $k$. 
    Since Byzantine clients are undetectable, these $n-t$ input vectors could be the non-faulty ones. This implies that the output vector $v$ is not in the trusted box, thus violating the box validity condition. This is a contradiction. 
    Hence, the output vector of any algorithm satisfying the box validity condition must be in the trimmed trusted hyperbox. 
\end{proof}

\begin{lemma}[Lower bound for box validity]\label{lem:lb_box_validity}
    The approximation ratio of any algorithm satisfying box validity is at least $\sqrt{\frac{1}{2}\cdot \min\left\{\left\lfloor\frac{n-t}{t}\right\rfloor,d\right\}}$ .
\end{lemma}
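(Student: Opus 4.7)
The plan is to construct a hard input in which the trimmed trusted hyperbox collapses to a single point (pinning the algorithm's output), while the adversary can still place the non-faulty centroid $\trueBar$ far from it. Let $k=\min\{\lfloor(n-t)/t\rfloor,\, d\}$ and consider the layout consisting of $t$ copies of each standard basis vector $e_1,\dots,e_k\in\mathbb{R}^d$ together with $n-kt$ copies of the origin. The definition of $k$ ensures $n-kt\geq t$, which makes the layout feasible and, crucially, leaves an origin group large enough to host the $t$ Byzantine clients used below.

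A per-coordinate count shows that $\ttb$ reduces to $\{\mathbf{0}\}$: for each $i\in[k]$ exactly $t$ of the input vectors have coordinate $i$ equal to $1$ and the remaining $n-t$ equal to $0$, so trimming the top $t$ and bottom $t$ sorted values strips away every $1$ together with $t$ zeros and leaves the degenerate interval $[0,0]$; for $i>k$ all coordinates are $0$. By \Cref{lem: box-val-implies-trimmed-box-agreement}, every algorithm satisfying box validity is therefore forced to output $O=\mathbf{0}$. I would then designate $t$ of the $n-kt$ origin vectors as Byzantine; the centroid of the $n-t$ surviving clients is $\trueBar=\frac{t}{n-t}\sum_{i=1}^{k}e_i$, giving $\distance(O,\trueBar)=\frac{t\sqrt{k}}{n-t}$.

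The final step is to upper bound $\radiusEncBall$. I would parameterize a generic element of $\barSet$ as $\frac{1}{n-t}\sum_i (t-t_i)\, e_i$, obtained by removing $t_i\in\{0,\dots,t\}$ vectors from group $e_i$ and $t_0=t-\sum_i t_i\in[0,n-kt]$ vectors from the origin, and then bound the diameter of $\barSet$. The main obstacle lies here: the feasibility constraint $\sum_i t_i+t_0=t$ must be used to rule out configurations in which two elements of $\barSet$ differ by the full $t$ in three or more coordinates at once, which would otherwise inflate the diameter and spoil the bound. A short case analysis gives $\mathrm{diam}(\barSet)=\frac{t\sqrt{2}}{n-t}$, attained by the pair of centroids obtained by removing the entire group $e_1$ versus the entire group $e_2$. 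Since the radius of a minimum covering ball never exceeds the diameter of the set it covers, $\radiusEncBall\leq\frac{t\sqrt{2}}{n-t}$, and combining the two estimates yields $\distance(O,\trueBar)/\radiusEncBall\geq\sqrt{k/2}$, which matches the claimed lower bound.
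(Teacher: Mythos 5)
Your proposal is correct and follows essentially the same route as the paper: the identical hard instance ($t$ copies of each of $k=\min\{\lfloor(n-t)/t\rfloor,d\}$ scaled basis vectors plus enough origin points, with the Byzantine clients hiding at the origin), collapsing \ttb to a single point via Lemma~\ref{lem: box-val-implies-trimmed-box-agreement}, and comparing $\distance(\mathbf{0},\trueBar)=\sqrt{k}\cdot t/(n-t)$ against an upper bound of $\sqrt{2}\cdot t/(n-t)$ on \radiusEncBall. The one point where you go beyond the paper is the radius step: you bound the full diameter of \barSet using the feasibility constraint $\sum_i t_i = t$, whereas the paper only exhibits the distance between one particular pair of candidate centroids and implicitly asserts it is extremal --- your version is the more rigorous reading of that step and is worth keeping.
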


\begin{proof}
    In order to prove the lower bound on the approximation ratio, we present a construction where the trimmed trusted hyperbox consists of just one vector. 
    Consider a setting where $n-t-\min\left\{\lfloor\frac{n-t}{t}\rfloor t,dt\right\}$ input vectors are at coordinate $(0,\ldots,0)$. We further assume that $t$ vectors are at coordinate $e_k = x\cdot u_k, \forall k\in \left[\min\left\{\lfloor\frac{n-t}{t}\rfloor,d\right\}\right]$, where $u_k$ is the $k^{th}$ unit vector. Suppose the $t$ Byzantine vectors choose their input vectors to be $(0,\ldots,0)$. Then, the trimmed trusted hyperbox is $(0, \dots, 0)$.

    The centroid of non-faulty vectors is 
        $\frac{t}{n-t}\sum_{k=1}^{\min\{\lfloor(n-t)/t\rfloor,d\}}e_k$
    and the distance between the trimmed trusted hyperbox and $\trueBar$ is 
    \begin{align*}
        \distance\big(\trueBar,(0,\ldots,0)\big) &= \sqrt{\sum\nolimits_{k=1}^{\min\{\lfloor(n-t)/t\rfloor,d\}} \left(\frac{t}{n-t}\cdot x\right)^2} \\
        &= \sqrt{\min\left\{\left\lfloor\frac{n-t}{t}\right\rfloor,d\right\}\cdot \left(\frac{t}{n-t}\cdot x\right)^2}
        = \sqrt{\min\left\{\left\lfloor\frac{n-t}{t}\right\rfloor,d\right\}}\cdot \frac{tx}{n-t}.
    \end{align*}

   Now the radius of the minimum covering  ball is at most the largest distance between two possible centroids:

    \begin{align*}
        \radiusEncBall&\leq \left\Vert \sum_{k=2}^{\min\{\lfloor(n-t)/t\rfloor,d\}}\left(\frac{t}{n-t}\cdot e_k\right) - \hspace{-4pt}\sum_{k = 1}^{\min\{\lfloor(n-t)/t\rfloor,d\}-1} \left(\frac{t}{n-t}\cdot e_k\right)\right\Vert_2 
        = 2\cdot\sqrt{\left(\frac{tx}{n-t}\right)^2}.
    \end{align*}

    Hence, the approximation ratio is at least
    \begin{align*}
        \frac{\distance\big(\trueBar,(0,\ldots,0)\big)}{\radiusEncBall}
        \ge \sqrt{\frac{1}{2}\cdot \min\left\{\left\lfloor\frac{n-t}{t}\right\rfloor,d\right\}}
    \end{align*}

\end{proof}

We finally consider convex validity. Note that no guarantees can be given for algorithms satisfying convex validity in the case $n>\max\{3,d+1\}$ since the safe area cannot be guaranteed to exist in such cases. The results presented here are therefore only of interest in applications where the number of clients surpasses the dimension of the training model.

\begin{lemma}[Upper bound for convex validity]\label{lem:ub_convex_validity}
    Consider the algorithm that outputs a vector contained in the \textit{safe area} that minimizes the distance to the center of $\encBall$. This algorithm computes a $2d$-approximation of the centroid.
\end{lemma}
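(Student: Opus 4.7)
The plan is to bound $\|p - \trueBar\|$ using the triangle inequality. Let $c^{\ast}$ denote the center of $\encBall$ and set $r = \radiusEncBall$. Then
\[
\|p - \trueBar\| \;\le\; \|p - c^{\ast}\| + \|c^{\ast} - \trueBar\|.
\]
The second summand is immediately at most $r$: since $\trueBar$ is itself a centroid of $n-t$ vectors (the non-faulty ones), $\trueBar \in \barSet \subseteq \encBall$, so its distance to the center of $\encBall$ is at most the radius. The task therefore reduces to proving $\|p - c^{\ast}\| \le (2d-1)\, r$, which delivers the claimed $2d$-approximation.

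For $\|p - c^{\ast}\|$, I would exploit that the safe area is closed and convex (as an intersection of convex hulls) and that $p$ is by construction the projection of $c^{\ast}$ onto it. Hence it is enough to exhibit a single safe-area point $q$ with $\|q - c^{\ast}\| \le (2d-1)\, r$. My candidate $q$ would be built via the following steps. First, I would record that every pair of centroids in $\barSet$ differs coordinate-wise by at most $2r$, so every edge of the centroid hyperbox $\bb$ has length at most $2r$, and $\trueBar \in \bb$ while $\bb$ itself lies at distance at most $r$ from $c^{\ast}$. Second, using the assumption $n > (d+1)t$, a Tverberg partition of the inputs guarantees the existence of a Tverberg point inside every $(n-t)$-subset's convex hull, hence inside the safe area. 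Third, I would bound the deviation of this Tverberg point from $\bb$ coordinate-by-coordinate, using that for each coordinate $k$ at least one of the $(n-t)$-subset convex hulls pins the safe area near the projection of $\bb$ onto coordinate $k$.

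The main obstacle I anticipate is precisely this last structural step: the safe area is defined as the intersection of convex hulls of \emph{input} vectors, whereas $\bb$ is defined from \emph{centroids}, and individual input vectors may lie arbitrarily far outside $\bb$. Consequently one cannot bound the safe area by $\bb$ naively; instead one must use that $d+1$ appropriately chosen $(n-t)$-subsets simultaneously constrain the safe area, which is where the dimension factor $d$ enters. Concretely, I expect a coordinate-wise bound of the form $|q_k - c^{\ast}_k| \le 2r$ for each $k \in [d]$, aggregated by the Euclidean norm into $\|q - c^{\ast}\| \le 2\sqrt{d}\,r$, with a more careful tracking of cross-coordinate alignment (mirroring the worst-case construction from Lemma~\ref{lem:lb_convex_validity}) pushing the bound up to $(2d-1)\,r$. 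Combining with step one then yields $\|p - \trueBar\| \le 2d\cdot\radiusEncBall$, matching the lower bound of Lemma~\ref{lem:lb_convex_validity} up to constants.
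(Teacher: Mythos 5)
There is a genuine gap, and it sits exactly where you placed your main anticipated obstacle: the bound $\|p - c^{\ast}\| \le (2d-1)\,r$ is asserted but never established, and the route you sketch toward it cannot work as stated. Your intermediate claim --- a coordinate-wise bound $|q_k - c^{\ast}_k| \le 2r$ aggregated in the Euclidean norm to $\|q - c^{\ast}\| \le 2\sqrt{d}\,r$ --- would yield a $(2\sqrt{d}+1)$-approximation, which contradicts the $2d$ lower bound of Lemma~\ref{lem:lb_convex_validity}. So the per-coordinate deviation of the safe area from the centroid hyperbox is \emph{not} $O(r)$ in general, and the passage from $2\sqrt{d}$ to $2d-1$ is not a matter of ``more careful tracking of cross-coordinate alignment''; the factor $d$ (rather than $\sqrt{d}$) must enter through a different mechanism. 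Indeed, as you yourself note, the safe area is cut out by convex hulls of \emph{input} vectors, which can lie far outside $\bb$, so no argument that confines the safe area near $\bb$ coordinate-by-coordinate can succeed.

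The paper's proof avoids locating any specific point of the safe area. It handles the case where the safe area meets $\encBall$ trivially (giving a $2$-approximation), and otherwise projects everything orthogonally onto the segment joining the two closest points $S$ (of the safe area) and $B$ (of $\encBall$), reducing to one dimension. The factor $d$ then comes from a counting lemma (Lemma~\ref{lem:safe_area_left_of_hyperplane}): a supporting hyperplane of the safe area can have at most $dt$ input vectors strictly on its far side, because the safe area is an intersection of $(n-t)$-subset hulls and each facet excludes at most $t$ points. Combined with the facts that at least $t+1$ projections are $\le 0$ and at least $t+1$ are $\ge x$, this yields $x \le \frac{n-t-l}{n-t}\overline{y}_{\min}$ and $\radiusEncBall \ge \frac{t}{2(n-t)}\overline{y}_{\max}$, whence $x/\radiusEncBall \le 2(n-t-l)/t \le 2d$. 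If you want to salvage your decomposition, the missing ingredient is precisely such a counting bound relating the number of input vectors that can separate the safe area from the ball of centroids to the dimension; without it the claim $\|p - c^{\ast}\| \le (2d-1)\,r$ has no support. (A minor secondary point: $\trueBar$ need not lie in $\barSet$ when fewer than $t$ clients are actually faulty, since it is then a centroid of more than $n-t$ vectors; it does lie in $\convexHull(\barSet) \subseteq \encBall$, so your first triangle-inequality step survives with that correction.)
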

\begin{proof}
    Observe that the algorithm computes at most a $2$-approximation of $\trueBar$ if the \textit{safe area} and $\encBall$ intersect. This is because the algorithm then chooses a vector that is contained in $\encBall$.

    Now we consider the remaining case, where the \textit{safe area} and $\encBall$ are disjoint. Let $x$ denote the distance between the \textit{safe area} and $\encBall$ and let $S$ denote the closest point of the \textit{safe area} to $\encBall$ and $B$ the closest point of $\encBall$ to the \textit{safe area}, so that the distance between $S$ and $B$ is $x$. We start by projecting all input vectors orthogonally onto $\overline{S,B}$. The approximation ratio of the algorithm is computed as $(x+\radiusEncBall)/\radiusEncBall$. Observe that the distance between any two centroids after their orthogonal projection onto $\overline{S,B}$ cannot increase due to the triangle inequality, while the distance between $S$ and $B$ remains unchanged. Therefore, the distance between any two projected centroids onto $\overline{S,B}$ is a lower bound on the diameter of $\encBall$. To simplify the discussion on distances, we assume that $S$ is at coordinate $0$ and $B$ is at coordinate $x$. 

    In the following, we will lower bound the size of $\radiusEncBall$ and upper bound the size of $x$. Let the projection of the vectors $v_1,\ldots, v_n$ be denoted $p_1,\ldots, p_n$ such that the vector projected on the smallest coordinate is denoted $p_1$ and the one projected on the largest coordinate is $p_n$. 
    Note that there must be at least $t+1$ vectors $p_i$ having negative coordinates, otherwise there would exist a convex hull of $n-t$ vectors that would project onto only strictly positive coordinates, which is a contradiction. 
    There are also at least $t+1$ vectors $p_j$ that have coordinate at least $x$. If this was not true, there would exists a centroid with a smaller coordinate than $x$, which is a contradiction. Further, there are at most $td$ vectors with a positive coordinate (see proof of Lemma~\ref{lem:safe_area_left_of_hyperplane}).

    Let $l$ denote the number of vectors $p_i$ with negative coordinates. Let $r$ denote the number of vectors $p_i$ with a larger coordinate than $x$, and let $y_1, \ldots, y_r$ denote the coordinates of these vectors in increasing order. Further, we say that the smallest $r-t$ coordinates have an average value of $\overline{y}_{min}$ while the largest $t$ coordinates have an average of $\overline{y}_{max}$. The average of all vectors $p_i$ with coordinates between $0$ and $x$ is defined to be $a$.

    Observe that $x$ is upper bounded by the coordinate of any possible centroid. We choose the following centroid to upper bound $x$: the average of some $t+1$ vectors with negative coordinates, all the vectors between $0$ and $x$, and the remaining smallest $r-t$ vectors with coordinates larger than $x$. This gives the following bound:

    \begin{align*}
        x &\le \frac{1}{n-t}\left(\sum\nolimits_{i=1}^{r-t} y_i + a\cdot(n-r-l)\right) \le  \frac{1}{n-t}(n-t-l)\cdot\overline{y}_{min}
    \end{align*}
    note that we upper bounded all vectors with coordinates smaller than $0$ by $0$.

    To lower bound the diameter of $\encBall$, we consider the difference between its largest and smallest coordinates:

    \begin{align*}
        \radiusEncBall \ge \frac{1}{2(n-t)}\left(\sum_{i=t+1}^{n} p_i - \sum_{i=1}^{n-t} p_i \right)
        \ge \frac{1}{2(n-t)}\left( t\cdot\overline{y}_{max} - \sum_{i=1}^{t} p_i\right) \ge \frac{t}{2(n-t)}\cdot\overline{y}_{max}
    \end{align*}
    where $\frac{1}{n-t}\sum_{i=1}^{t} p_i \le 0$ since there are at least $t+1$ vectors $p_i$ with negative coordinates.
    
    The approximation ratio achieved by the algorithm can now be upper bounded by: 

    \begin{align*}
        \frac{x}{\radiusEncBall} +1 \le \frac{\frac{1}{n-t}(n-t-l)\cdot\overline{y}_{min}}{\frac{t}{2(n-t)}\cdot\overline{y}_{max}} +1
        \le \frac{2(n-t-l)}{t} +1 \le \frac{2dt}{t} +1= 2d +1.
    \end{align*}

    The last inequality holds because there can be at most $dt$ vectors with positive coordinates, i.e., $n-t-l \le dt$. \qedhere

\end{proof}

\begin{lemma}\label{lem:safe_area_left_of_hyperplane}
    Assume that the \textit{safe area} is a $q$-dimensional convex polytope, where $1\le q\le d$. Consider the $q$-dimensional subspace in which the \textit{safe area} is defined. Let $H$ be a hyperplane that touches the \textit{safe area} and divides the $q$-dimensional space into two subspaces. Then, there can be at most $qt$ points on the opposite side of $H$ wrt. the \textit{safe area}. 
\end{lemma}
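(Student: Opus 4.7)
The plan is to apply Carath\'eodory's theorem for conic combinations to the normal cone of the safe area at a touching point with $H$, decomposing the outward normal of $H$ as a non-negative combination of at most $q$ facet normals coming from individual $(n-t)$-subset convex hulls. The bound $qt$ then emerges from the fact that each chosen facet normal rules out at most $t$ input points from membership in the corresponding subset.

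Concretely, I would fix $p^{\star} \in H \cap \text{safe area}$, which exists since $H$ touches the safe area, and let $\vec{n}$ be the outward normal of $H$ within the $q$-dimensional affine hull $V$ of the safe area, pointing toward the opposite side. Because $H$ is a supporting hyperplane of the safe area at $p^{\star}$, the vector $\vec{n}$ lies in the normal cone $N$ of the safe area at $p^{\star}$, viewed as a cone in $V$. Using that the safe area equals $\bigcap_{S}(V \cap \text{conv}(S))$ over all $(n-t)$-subsets $S$ and that this intersection has non-empty relative interior in $V$ (as it is $q$-dimensional), the standard convex-analysis identity gives
\[
N \;=\; \sum_{S} N_V\bigl(V \cap \text{conv}(S),\, p^{\star}\bigr)
\]
as a Minkowski sum of polyhedral cones, whose extreme rays are contained in the union of the outward $V$-normals of facets of the various $V \cap \text{conv}(S)$ passing through $p^{\star}$.

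Next, applying Carath\'eodory's theorem for conic combinations in $V \cong \mathbb{R}^q$, I would write $\vec{n} = \sum_{i=1}^{q} \alpha_i \vec{n}_i$ with $\alpha_i \ge 0$, where each $\vec{n}_i$ is the outward $V$-normal of a facet of some $V \cap \text{conv}(S_i)$ through $p^{\star}$, for a (possibly repeated) subset $S_i$ of size $n-t$. For any input point $p$ on the opposite side of $H$, the inequality $\vec{n} \cdot (p - p^{\star}) > 0$ together with $\alpha_i \ge 0$ forces at least one index $i$ with $\vec{n}_i \cdot (p - p^{\star}) > 0$. Since the supporting half-space $\{\vec{n}_i \cdot (x - p^{\star}) \le 0\}$ contains all of $\text{conv}(S_i)$, this rules out $p \in S_i$, so $p \in S_i^c$. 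Taking the union over the $q$ indices gives that every opposite-side point lies in $\bigcup_{i=1}^{q} S_i^c$, yielding $|P| \le \sum_{i=1}^{q} |S_i^c| = qt$.

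The main obstacle will be justifying the final implication ``$\vec{n}_i \cdot (p - p^{\star}) > 0$ forces $p \notin S_i$'' when $q < d$: one must pick each $\vec{n}_i$ as the restriction to $V$ of a genuine facet normal of $\text{conv}(S_i)$ in $\mathbb{R}^d$, so that the supporting half-space actually rules out ambient membership in $\text{conv}(S_i)$ rather than merely in the $V$-slice $V \cap \text{conv}(S_i)$. For $q = d$ this is immediate from the normal-cone characterization, but for lower-dimensional safe areas it requires verifying that facets of $V \cap \text{conv}(S_i)$ lift to facets of $\text{conv}(S_i)$ whose $\mathbb{R}^d$-normals project onto the chosen $\vec{n}_i$. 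Degenerate cases --- such as $p^{\star}$ lying on a face of the safe area of dimension below $q-1$, or several extreme rays coming from the same subset --- only strengthen the bound and can be absorbed into the same argument.
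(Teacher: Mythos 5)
Your overall strategy is the same as the paper's: at a point where $H$ touches the \textit{safe area}, exhibit at most $q$ supporting half-spaces, each contributed by a single $(n-t)$-subset and hence each excluding at most $t$ input points, and conclude by a union bound. Your execution via the normal-cone sum formula and conic Carath\'eodory is cleaner than the paper's (which asserts that exactly $q$ facets of the safe area meet at a vertex, something that holds only for simple polytopes), and for $q=d$ your argument is complete.

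However, the obstacle you flag at the end for $q<d$ is a genuine gap, not a deferred verification, and the ``lift'' you hope for does not exist in general. A facet normal $\vec{n}_i$ of the slice $V\cap\convexHull(S_i)$ lies in $W^{\perp}+N$, where $W$ is the linear space parallel to $V$ and $N$ is the ambient normal cone of $\convexHull(S_i)$ at $p^{\star}$; it need not lie in $N$ itself, so the half-space $\{\vec{n}_i\cdot(x-p^{\star})\le 0\}$ can strictly cut $\convexHull(S_i)$. Concretely, for $C=\convexHull\{(1,1),(1,-1),(-1,2)\}$ and $V$ the $x$-axis, the slice $V\cap C=[1/3,1]\times\{0\}$ has outward $V$-normal $(-1,0)$ at its left endpoint $(1/3,0)$, yet the vertex $(-1,2)\in C$ satisfies $(-1,0)\cdot\bigl((-1,2)-(1/3,0)\bigr)>0$. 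Hence $\vec{n}_i\cdot(p-p^{\star})>0$ does not imply $p\notin S_i$. If you repair this by using genuine elements of the ambient normal cones (which do exclude points of $S_i$), those vectors span up to $d$ dimensions, and Carath\'eodory then only lets you select $d$ of them, yielding the bound $dt$ rather than $qt$. This weakening seems unavoidable: taking $n=5$, $t=1$, four points at $(0,-2),(0,-1),(0,0),(0,1)$ and one at $(5,5)$ gives a one-dimensional safe area $\{0\}\times[-1,0]$ with two input points strictly beyond the touching hyperplane $\{y=0\}$, exceeding $qt=1$ under the orthogonal-projection reading of ``opposite side'' that \Cref{lem:ub_convex_validity} relies on. Note that $dt$ is the only form of the bound the paper actually invokes there, and the paper's own proof glosses over exactly the same issue.
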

\begin{proof}
    
    Consider a vertex $s_v$ of the safe area that lies at the intersection of the \textit{safe area} with the hyperplane $H$. Note that at least one such vertex must exist since the \textit{safe area} is a convex polytope. 
    
    Observe that exactly $q$ $(q-1)$-faces of \textit{safe area} meet in $s_v$. Each of these faces are  hyperplanes , denoted $H_1,\ldots, H_d$, and go through $s_d$, each of them defined by a face of the safe area. The \textit{safe area} is defined such that, for each face $F_i$, at most $t$ vectors can lie outside of \textit{safe area} and thus on the opposite side of $H$ w.r.t. \textit{safe area}. In total, at most $qt$ can lie on the opposite side of $H$. And at least $n-qt>n-dt$ vectors must lie inside \textit{safe area}. 
\end{proof}

\subsection{Federated learning in peer-to-peer networks}

The results presented in this paper also hold for federated learning in synchronous peer-to-peer networks. In the peer-to-peer setting, there is no trusted server. Instead, the clients communicate with each other in a fully-connected network by sending messages. The aggregation step by the server is replaced by an exact Byzantine agreement algorithm that makes sure that the clients agree on the same aggregation vector.

The lower bounds presented in Section~\ref{sec:validity_bad} and~\ref{sec:traditional_bounds} trivially extend to this distributed setting, as they are presented for a stronger setting in which the clients do not receive different sets of vectors as it is possible in a peer-to-peer setting. On the other hand, interactive consistency protocols~\cite{IC1,IC2} from distributed computing allow the clients to agree on the same set of vectors. Thus, each client can apply the aggregation algorithms presented in this paper locally. Since these algorithms are deterministic, all clients would output identical vectors after Byzantine agreement.

\section{Empirical evaluation}\label{sec:practical}

In the practical evaluation, we differentiate between the two FL variants where the model parameters or the gradients are exchanged. We consider $n$ clients, where each client $i\in [n]$ has access to its own data that follows an unknown distribution $\mathcal{D}_i$. Let $F_i(x)$ be the local loss function of client $i$ with respect to model parameter $x$. The objective is $$\argmin\nolimits_{x\in\mathbb{R}^d} F(x),\quad \text{where}\quad F(x)=\frac{1}{n}\sum\nolimits_{i=1}^{n}F_i(x)$$
We do not differentiate between the amounts of data points that a client holds, as a Byzantine client could lie about its data to have a larger influence. The training is executed in rounds. We differentiate between the following two settings:

\paragraph*{\fedsgd}
In each round $r$, a client locally computes the gradient $g_i(x_r) = \nabla F_i(x_r)$ on its dataset. It then sends $g_i(x_r)$ to the server. The server upgrades the global model by aggregating the gradients $x_{r+1} \leftarrow x_r - \eta\frac{1}{n} \sum_{i=1}^{n} g_i(x_r)$, where $\eta$ is a fixed learning rate, and sends the new model to the clients for the next round.

\paragraph*{\fedavg}
In each round $r$, a client locally updates its model parameters (possibly multiple times) $x_{r+1}^i \leftarrow x_{r}^i - \eta g_i(x_r)$. It then shares its model parameter $x_{r+1}^i$ with the server. The server aggregates the model parameters $x_{r+1} \leftarrow \sum_{i=1}^{n} x_{r+1}^i$ and shares the new model with the clients.

\medskip The aggregation algorithm in the definition of \fedsgd and \fedavg is an unweighted average of the vectors. For the experiments, we replace this aggregation step with one of the aggregation algorithms presented in Section~\ref{sec:traditional_bounds}. These aggregation algorithms are summarized below.

\paragraph*{Aggregation algorithms}
We implemented the following aggregation algorithms for comparison: 
\begin{itemize}
    \item \textbf{Center of $\encBall$:} This algorithm computes all possible centroids on subsets of $n-t$ vectors and outputs the center of $\encBall$. The algorithm achieves a $1$-approximation of the centroid and satisfies weak validity.
    \item \textbf{MDA\cite{jungle}:} This algorithm computes a subset of $n-t$ vectors with the smallest diameter and outputs the centroid of this subset. The algorithm achieves a $2$-approximation of the centroid and satisfies strong validity. 
    \item \textbf{Box Algorithm\cite{centroid-paper}:} This algorithm computes the intersection of $\ttb$ and $\bb$, and outputs the center of this intersection. In \cite{centroid-paper}, it was shown that such an intersection is non-empty for $n>3t$. The algorithm achieves a $2\sqrt{d}$-approximation of the centroid and satisfies box validity.
\end{itemize}

We do not implement the algorithm based on the \textit{safe area} (see Lemma~\ref{lem:ub_convex_validity}), since this algorithm only works in scenarios where $n>(d+1)t$. Considering that our training models have 200 dimensions, such an experiment was not feasible in our experimental setup.

\subsection{Experimental setup}
We implement a client/server federated learning model for solving classification tasks in Python using the Tensorflow library. 
The models are evaluated on the MNIST dataset from Kaggle\footnote{\url{https://www.kaggle.com/datasets/scolianni/mnistasjpg}, accessed on 13.05.2025}. The dataset contains 42,000 images of handwritten digit in JPEG format which are labeled, and each class of the data is kept in a separate folder. 
We consider 10 clients with 3 different data distributions: homogeneous, mild heterogeneous and extreme heterogeneous. For the homogeneous data distribution, we shuffle the entire dataset and split it among 10 clients. The mild heterogeneity case splits each class into $10$ parts, where $8$ parts contain $10\%$ of the class, one part $5\%$ and one part $15\%$ of the class. In the extreme heterogeneity case, the data is sorted and split into 20 partitions. Each client is randomly assigned two partitions, ensuring they contain data from two distinct classes.
Note that the scenarios where clients have different local dataset sizes are not taken into account, as Byzantine clients could exploit this variation to their advantage.  

The underlying neural network for solving the image classification task is a MultiLayer Perceptron (MLP) with 3 layers. The learning rate is set to $\eta = 0.01$ and the decay is calculated with respect to the number of global communication rounds (epochs), i.e. $decay = \frac{\eta}{rounds}$.

Byzantine behavior in federated learning has been extensively studied and the attacks are categorized into training-based and parameter-based attacks \cite{shi2022challenges}. While training-based attacks, also known as data poisoning attacks, have been analyzed in \cite{biggio2012poisoning, mahloujifar2019data, farhadkhani2024brief}, our work focuses on parameter-based attacks \cite{shi2022challenges, farhadkhani2022byzantine}. Specifically, we implement the sign flip attack, inspired by the signSGD algorithm \cite{jin2020signSGD, bernstein2018signsgd}, where gradients are replaced by their sign values during transmission. In the sign flip attack, the gradient of the faulty clients is multiplied by $-1$ and sent to the server.
This attack has been widely used in practical simulations \cite{variance-reduced-sgd, elite, farhadkhani2022byzantine, gradient-filtering2022, probabilistic-signflip}.

\subsection{Experimental results}
\begin{figure}[h]
    \centering
    \begin{subfigure}{0.48\textwidth}
        \centering        \includegraphics[width=\textwidth]{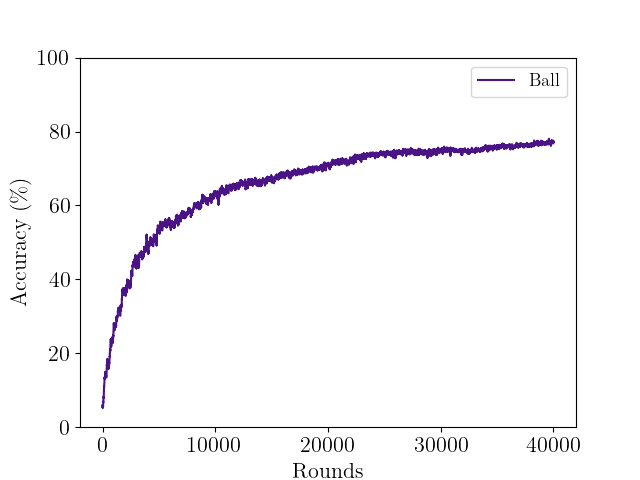}
        \caption{Center of $\encBall$ without failures.}
        \label{fig:ball}
    \end{subfigure}
    \hfill
    \begin{subfigure}{0.48\textwidth} 
        \centering        \includegraphics[width=\textwidth]{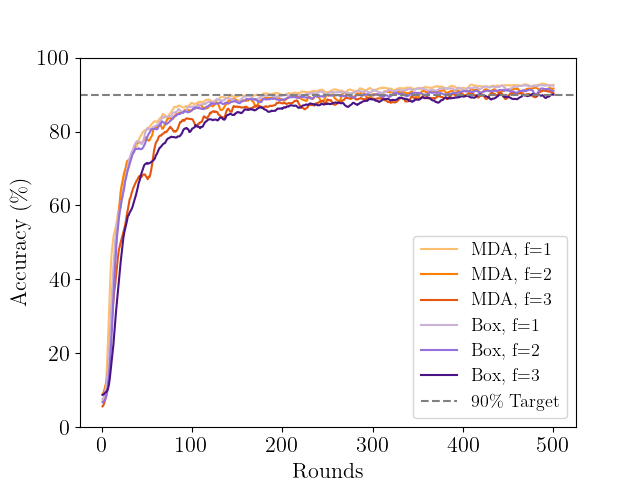} 
        \caption{MDA and Box with different numbers of failures.}
        \label{fig:fedsgd-homo}
    \end{subfigure}   
    \caption{$\fedsgd$ setting on homogeneous data with MDA, Box and $\encBall$ algorithm}
    \label{fig:figure2}
\end{figure}

\begin{figure}[H]
    \centering
    \begin{subfigure}{0.48\textwidth}
        \centering        \includegraphics[width=\textwidth]{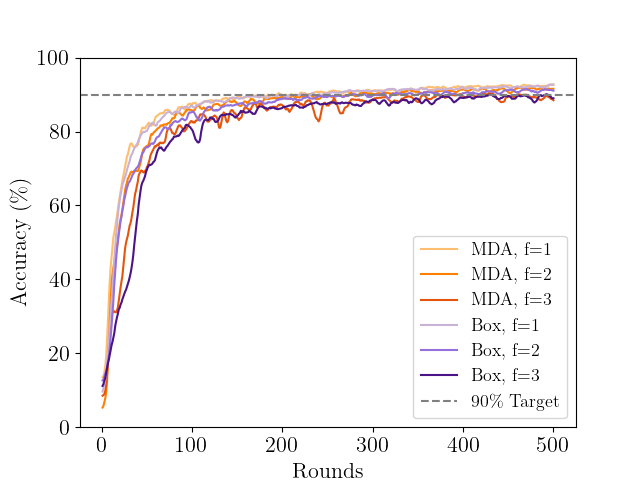}
        \caption{$\fedsgd$ with mild heterogeneous data.}
        \label{fig:sub2}
    \end{subfigure}
    \hfill
    \begin{subfigure}{0.48\textwidth}
        \centering        \includegraphics[width=\textwidth]{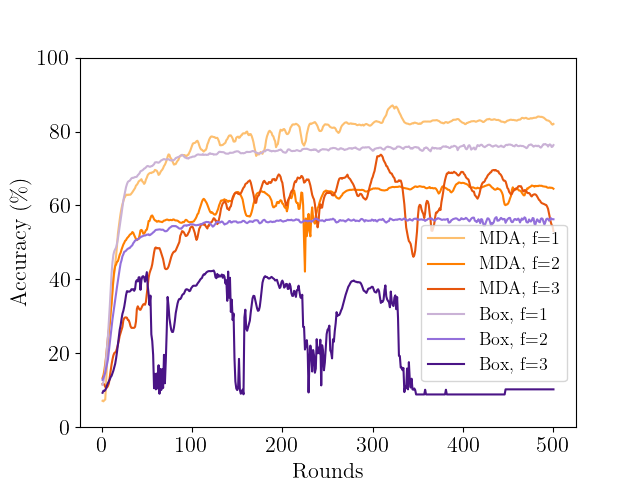}
        \caption{$\fedsgd$ with extreme heterogeneous data.}
        \label{fig:sub3}
    \end{subfigure}
    \begin{subfigure}{0.48\textwidth}
        \centering        \includegraphics[width=\textwidth]{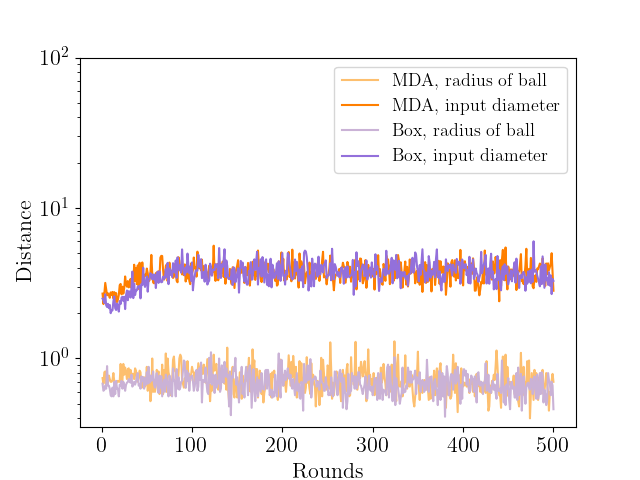}
        \caption{Radius of the ball and diameter of non-faulty vectors with mild heterogeneous data distribution in $\fedsgd$ setting with $f=1$.}
        \label{fig:rad-mild-hetero}
    \end{subfigure}
    \hspace{10pt}
    \begin{subfigure}{0.48\textwidth}
        \centering        \includegraphics[width=\textwidth]{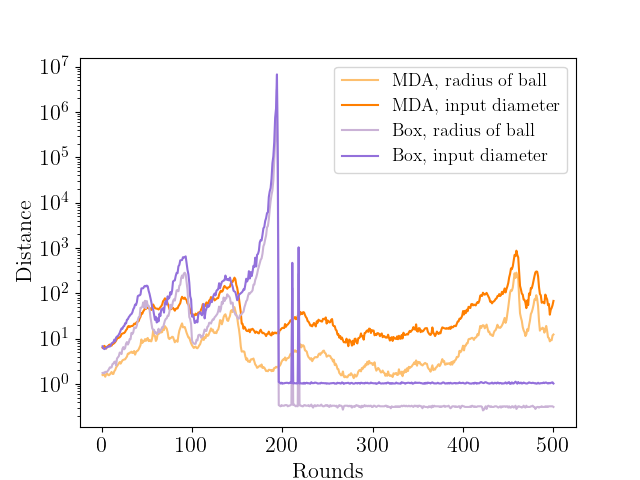}
        \caption{Radius of the ball and diameter of non-faulty vectors with extreme heterogeneous data distribution in $\fedsgd$ setting with $f=3$.}
        \label{fig:rad-extreme}
    \end{subfigure}
    \caption{$\fedsgd$ with mild heterogeneous and extreme heterogeneous data under sign flip attack.}
    \label{fig:fedsgd}
\end{figure}

\begin{figure}[h]
    \centering
    \begin{subfigure}{0.48\textwidth}
        \centering        \includegraphics[width=\textwidth]{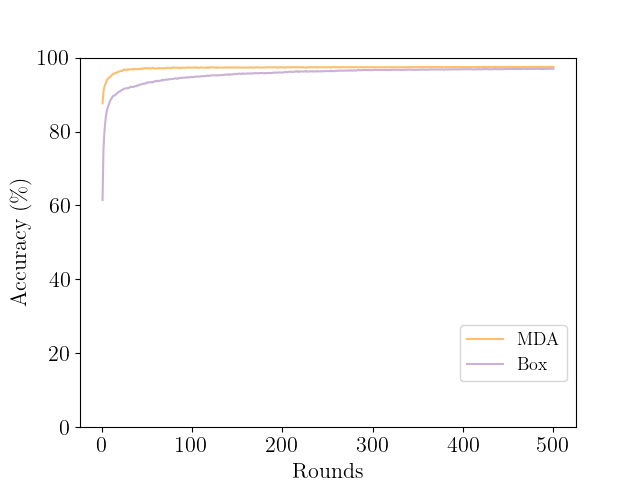}
        \caption{$\fedavg$ without attacks}
        \label{fig:fedavg_noattack}
    \end{subfigure}
    \hfill  
    \begin{subfigure}{0.48\textwidth}
        \centering        \includegraphics[width=\textwidth]{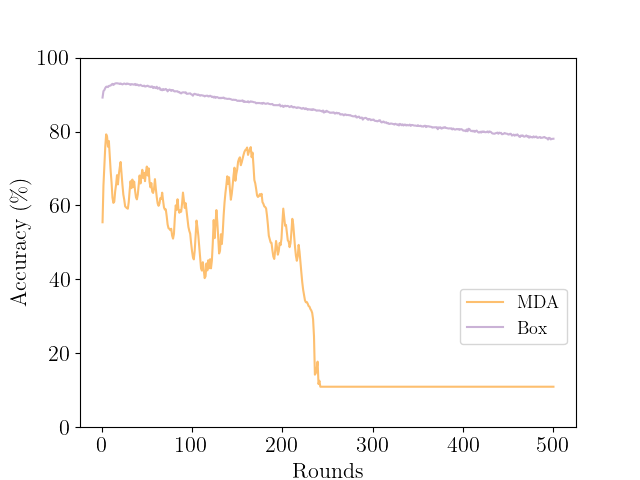}
        \caption{$\fedavg$ under the sign flip attack $f=1$}
        \label{fig:fedavg-homog}
    \end{subfigure}
    \caption{$\fedavg$ setting with homogeneous data distributions with and without Byzantine behavior}
    \label{fig:fedavg}
\end{figure}

Figure \ref{fig:ball} illustrates the Center of $\encBall$ algorithm in the $\fedsgd$ setting with no Byzantine behavior. It can be observed that after 40,000 rounds $\encBall$ algorithm reaches over $77\%$. The Center of $\encBall$ algorithm requires significantly more rounds than the MDA or the Box algorithm in Figure \ref{fig:fedsgd-homo}
and is therefore not evaluated under Byzantine behavior and different heterogeneity distributions. 

In Figure \ref{fig:fedsgd-homo} we evaluate MDA and Box algorithm in the $\fedsgd$ setting with homogeneous data distribution under the sign flip attack. We set the number of Byzantine clients to be $f=1$, $f=2$ or $f=3$. MDA and the Box algorithm converge and achieve over 90$\%$ accuracy, even with $f>1$.

In Figure~\ref{fig:sub2}, the mild heterogeneous case  shows a slight decrease of $2\%$ in accuracy with $f>1$, compared to the homogeneous case. 
In Figure \ref{fig:fedsgd-homo} and Figure \ref{fig:sub2} a drop in accuracy between the different number of adversaries can be noticed. With the mild heterogeneous data in Figure~\ref{fig:sub2} and $f=3$, MDA shows more instability between rounds 220 and 270 with sudden changes in accuracy by up to 7$\%$. Box algorithm appears to be more stable in that case.

Figure~\ref{fig:sub3} illustrates the extreme heterogeneous case, where each client has two classes of data. The Box algorithm with $f=1$ and $f=2$ converges and achieves $76\%$ and $57\%$ accuracy, respectively. However, with $f=3$ the box algorithm fails to converge. MDA with $f=1$ achieves $82\%$ accuracy. 
For $f>1$, MDA becomes unstable and does not converge.
In this case, the MDA algorithm achieves a higher accuracy than the Box algorithm, but it does not seem to converge as robustly as the Box algorithm under one or two Byzantine failures. For $f=3$, the MDA and the Box algorithms both fail to converge.
These results suggest that there may be a trade-off between the centroid approximation and the different validity conditions also in practice, which we plan to investigate in more detail in future work.

Figures~\ref{fig:rad-mild-hetero} depicts radius of the smallest enclosing ball and diameter of the cnon-faulty vectors in $\fedsgd$ setting with mild heterogeneity and $f=1$. A relationship between Figure~\ref{fig:sub2} with $f=1$ and \ref{fig:rad-mild-hetero} can be observed. MDA and Box algorithm converge, therefore diameter of non-faulty input vectors and radius of the ball remain constant throughout the training process. Figure~\ref{fig:rad-extreme} represents the radius of the smallest enclosing ball and diameter of the non-faulty vectors in $\fedsgd$ setting with extreme heterogeneity and $f=3$, which seems to correlate with Figure~\ref{fig:sub3}. It can be noticed that the Box algorithm fails to converge, causing both the radius of the ball and the input diameter to approach zero. 

Figure~\ref{fig:fedavg} shows $\fedavg$ setting with homogeneous data distribution. When there is no Byzantine behavior, as in Figure~\ref{fig:fedavg_noattack}, MDA and Box algorithm converge and achieve over $97\%$ accuracy. However, if there is one Byzantine party in the system, as shown in Figure~\ref{fig:fedavg-homog}, MDA fails to converge. Further, there is a distinct decrease in the accuracy of the Box algorithm, which implies that the sign flip attack has a big impact on the system and prevents the model from converging. In the future, we intend to continue the empirical evaluation and test out various Byzantine attacks in different settings.

\section*{Acknowledgments}
We thank anonymous reviewers for their helpful feedback on previous versions of this work. This work was supported in part by the Research Council of Finland, Grant 333837 and the German Research Foundation (DFG), Schwerpunktprogramm, SPP 2378 (project ReNO), 2023-2027.

\bibliography{references}

\newpage


\end{document}